\documentclass{article}
\usepackage{authblk}
\usepackage{fullpage}
\usepackage[numbers, sort&compress]{natbib}

\usepackage[hyphens]{url}  
\usepackage{graphicx} 
\usepackage{caption} 
\usepackage{booktabs}
\usepackage{tcolorbox}

\usepackage{amsfonts}
\usepackage{latexsym} 
\usepackage{amsmath}
\usepackage{amsthm}
\usepackage{aliascnt}
\usepackage{mathtools}
\usepackage{mathabx}
\usepackage{bm}
\usepackage{mathrsfs}
\usepackage{xcolor}
\usepackage{xspace}
\DeclareMathAlphabet{\pazocal}{OMS}{zplm}{m}{n}
\DeclareMathAlphabet{\mathbbm}{U}{bbm}{m}{n}

\newcommand{\squishlist}{
\begin{list}{$\bullet$}
{  \setlength{\itemsep}{0pt}
\setlength{\parsep}{3pt}
\setlength{\topsep}{3pt}
\setlength{\partopsep}{0pt}
\setlength{\leftmargin}{2em}
\setlength{\labelwidth}{1.5em}
\setlength{\labelsep}{0.5em}
} }
\newcommand{\squishlisttight}{
\begin{list}{$\bullet$}
{ \setlength{\itemsep}{0pt}
	\setlength{\parsep}{0pt}
	\setlength{\topsep}{0pt}
	\setlength{\partopsep}{0pt}
	\setlength{\leftmargin}{2em}
	\setlength{\labelwidth}{1.5em}
	\setlength{\labelsep}{0.5em}
}}

\newcommand{\squishdesc}{
\begin{list}{}
	{  \setlength{\itemsep}{0pt}
		\setlength{\parsep}{3pt}
		\setlength{\topsep}{3pt}
		\setlength{\partopsep}{0pt}
		\setlength{\leftmargin}{1em}
		\setlength{\labelwidth}{1.5em}
		\setlength{\labelsep}{0.5em}
} }

\newcommand{\squishend}{
\end{list}
}

\newcommand{\Acal}{\mathcal{A}}
\newcommand{\Bcal}{\mathcal{B}}

\newcommand{\Ical}{\mathcal{I}}

\newaliascnt{proposition}{theorem}
\newtheorem{proposition}[proposition]{Proposition}
\aliascntresetthe{proposition}

\newaliascnt{corollary}{theorem}
\newtheorem{corollary}[corollary]{Corollary}
\aliascntresetthe{corollary}

\newaliascnt{remark}{theorem}
\newtheorem{remark}[remark]{Remark}
\aliascntresetthe{remark}

\newtheorem{mechanism}{Interaction Mechanism}
\newtheorem{definition}{Definition}

\newaliascnt{lemma}{theorem}

\aliascntresetthe{lemma}

\newcommand{\eg}{\emph{e.g.}\xspace}
\newcommand{\ie}{\emph{i.e.}\xspace}

\newcommand{\xhdr}[1]{\vspace{0.1cm}\noindent {\bf #1.}}

\usepackage{thmtools}
\usepackage{thm-restate}

\usepackage{cleveref}

\Crefname{theorem}{theorem}{theorems}
\Crefname{theorem}{Theorem}{Theorems}
\Crefname{section}{section}{sections}
\Crefname{section}{Section}{Sections}
\Crefname{lemma}{Lemma}{Lemmas}
\Crefname{proposition}{proposition}{propositions}
\Crefname{proposition}{Proposition}{Propositions}
\Crefname{corollary}{corollary}{corollaries}
\Crefname{corollary}{Corollary}{Corollaries}
\Crefname{remark}{remark}{remarks}
\Crefname{remark}{Remark}{Remarks}
\Crefname{problem}{problem}{problems}
\Crefname{problem}{Problem}{Problems}
\Crefname{mechanism}{mechanism}{mechanisms}
\Crefname{mechanism}{Mechanism}{Mechanisms}
\Crefname{model}{model}{models}
\Crefname{model}{Model}{Models}
\Crefname{assumption}{Assumption}{assumptions}
\Crefname{assumption}{assumption}{Assumptions}

\newcommand{\PoM}{\ensuremath{\mathrm{PoM}}\xspace}

\title{Linguistic Monoculture in LLM-Assisted Language Use}
\author[$\S$]{Suhas Thejaswi}
\author[$\S$]{Juhi Kulshreshta}
\author[$\dagger$]{Lutz Oettershagen}

\affil[$\S$]{Aalto University, Finland\protect\\ 
\texttt{firstname.lastname@aalto.fi} \vspace{0.5em}}

\affil[$\dagger$]{University of Liverpool, United Kingdom \protect\\
\texttt{firstname.lastname@liverpool.ac.uk} \vspace{0.5em}}

\date{}

\begin{document}

\maketitle
\begin{abstract}
    Writing and communication are increasingly mediated by large language models (LLMs) that are being used to draft, revise and polish text. Although such assistance can improve clarity and help authors meet institutional expectations, widespread reliance on shared models may reduce population-level variation in linguistic form, a phenomenon we refer to as \emph{linguistic monoculture}. We develop a mathematical framework in which authors and LLMs are represented as distributions over linguistic features and co-evolve through repeated interaction. We analyze three interaction mechanisms: a shared model with a fixed linguistic distribution, a shared model recursively updated from author outputs, and personalized models updated through author-specific and population-level feedback. We characterize the resulting equilibria and convergence rates, showing that, shared models can drive authors toward a common norm, recursive feedback relocates the shared norm without altering pairwise spread under common conformity, and personalization can preserve a family of distinct author–model equilibria with nonzero linguistic diversity. We then endogenize conformity as a strategic choice trading off private benefits from clarity, legibility, and perceived fluency against distinctive style. Within this utility model, individually rational authors may conform more than is socially optimal because they do not internalize the value their distinctiveness provides to others, creating a negative externality and a price of monoculture that is finite for each fixed instance but can grow without bound when distinctiveness dominates authenticity. Synthetic simulations illustrate how fixed shared assistance, recursive feedback, and personalization produce different long-run diversity outcomes.
\end{abstract}

\paragraph{Keywords:} Algorithmic monoculture, Linguistic monoculture, Price of Monoculture

\section{Introduction} \label{sec:introduction}
Large language models (LLMs) are increasingly used to draft, revise, and polish text. Although such assistance can improve clarity, reduce errors, and help authors meet institutional expectations, widespread reliance on shared models may pull users toward similar model-mediated linguistic patterns. Moreover, LLM-generated or LLM-assisted text may enter future training and personalization pipelines, creating a coupled population-level system in which authors adapt to model outputs and models may in turn adapt to language already influenced by earlier models. Repeated interaction with shared generative systems may therefore reduce variation in language through which people develop, communicate and distinguish ideas~\citep{sourati2025shrinking,padmakumar2024does}.

We call this reduction in population-level variation across lexical choices, syntactic constructions, discourse markers, and other stylistic patterns as {\emph{linguistic monoculture}}. Recent empirical work provides evidence that LLM assistance can homogenize human text, ideas, expressions, and stylistic choices across users~\citep{sourati2026homogenizing,padmakumar2024does,doshi2024generative,anderson2024homogenization,moon2025homogenizing,jakesch2023co}. Large-scale studies have also documented shifts in word frequencies and LLM-associated stylistic markers in scientific articles and abstracts~\citep{kobak2025delving,geng2025impact,geng2025human,liang2025quantifying}. This concern is especially consequential in academia, where scientific writing helps make concepts legible, establish accepted methods, and shape what intellectual communities recognize as a contribution~\citep{bazerman1988shaping,swales1990genre,hyland2009academic,latour2013laboratory}. We therefore ask:\looseness=-1
\begin{tcolorbox}
\textbf{\emph{Under what forms of repeated author–model interaction does LLM assistance drive a population toward a shared linguistic norm, and when can heterogeneous preferences or personalization preserve diversity?}}
\end{tcolorbox}

Our analysis focuses on diversity in linguistic-feature distributions rather than convergence in semantic content, reasoning, or intellectual perspective. Linguistic variation is nevertheless a distinct and measurable dimension of expression through which authors establish voice, structure arguments and make distinctions legible to others~\citep{gumperz1996rethinking,lucy1997linguistic,evans2009myth,enfield2015linguistic}. At the same time, not all convergence is undesirable: shared conventions can improve clarity, reduce errors, and make writing easier to understand and evaluate~\citep{hyland2009academic,bietti2026will}. The relevant question is therefore when shared AI assistance produces useful standardization and when it causes an excessive loss of population-level linguistic diversity.

To answer this question, we develop a reduced-form framework representing authors and prompt-conditioned LLM output distributions over linguistic features. It isolates how sharing, deployment-level feedback, and personalization affect population-level diversity, measured by average pairwise Jensen--Shannon (JS) divergence between author distributions.
We examine three author–LLM interaction mechanisms: a shared model with a fixed linguistic distribution, a shared model updated recursively from author outputs, and personalized models updated through author-specific and population-level feedback. These mechanisms isolate whether the model is shared, whether it is recursively updated, and whether its feedback is personalized. We then endogenize conformity to a shared linguistic norm as a strategic choice and compare individually optimal conformity with the socially optimal level and quantify the loss of linguistic diversity through the \emph{price of monoculture}.

Our analysis of interaction dynamics initially treats conformity as exogenous. In practice, however, adopting a shared linguistic norm may be a strategic choice: standardized language can improve clarity, legibility and perceived fluency, and may be rewarded by reviewers, readers, or institutions~\citep{giles2016communication,bell1984language}. Yet such private benefits may come at the cost of an author's distinctiveness preferences and reduce population-level linguistic diversity. We therefore compare individually optimal conformity with the socially optimal level.
In detail, our contributions are as follows:\footnote{All proofs and omitted details are in the Appendix.}

\begin{enumerate}
\item \textbf{A reduced-form framework for LLM-mediated dynamics.}
We formalize authors and prompt-conditioned LLM output distributions over linguistic features under three deployment mechanisms: a fixed shared system, a recursively updated shared system, and recursively updated personalized systems (\Cref{sec:framework}).

\item \textbf{Linguistic convergence and determinants of diversity.}
We establish convergence and convergence-rate bounds for all three mechanisms and characterize how conformity pressure, author-specific preferences, recursive feedback, and personalization determine long-run population-level diversity. Shared models can drive diversity to low levels, whereas author-specific preferences and personalization can preserve diversity (\Cref{sec:convergence}).

\item \textbf{Strategic conformity and the price of monoculture.}
We show that individually rational conformity can exceed the social optimum because authors may not internalize the value their distinctiveness provides to others; we bound the price of monoculture in a symmetric regime and identify when it diverges (\Cref{sec:strategic}).

\item \textbf{Quantitative comparison of interaction mechanisms.}
Synthetic simulations illustrate how fixed shared assistance, recursive feedback, and personalization produce different long-run diversity outcomes across controlled parameter regimes. Paired runs on identical initializations enable controlled comparison of the interaction mechanisms (\Cref{sec:experiments} and Appendix~\ref{app:experiments}).
\end{enumerate}

\xhdr{Related work}
Our work connects research on LLM-induced homogenization, opinion dynamics, recursive model training, algorithmic monoculture, and communication accommodation. Building on classical averaging models, we couple author adaptation to shared or personalized models updated from author outputs, and analyze linguistic-diversity equilibria and strategic welfare.

\smallskip
\noindent
\emph{LLM-induced homogenization.}
Empirical studies find that LLM assistance can homogenize written expression, ideas, authorial voice, and cultural style across users~\citep{padmakumar2024does,anderson2024homogenization,moon2025homogenizing,abdulhai2026llms,agarwal2025ai}. Analyses of academic writing and presentations similarly document increased use of LLM-associated linguistic markers~\citep{geng2025impact,geng2025human,kobak2025delving,liang2025quantifying}. These studies motivate our problem; we complement them by formalizing how homogenization can emerge through repeated author–LLM interaction.

\smallskip
\noindent
\emph{Opinion dynamics.}
Our work builds on classical averaging models~\citep{degroot1974reaching,friedkin1990social,rainer2002opinion}, using them as a reduced-form representation of an algorithmic intermediary whose output distribution may be updated from the population it influences. This introduces two deployment choices absent from human-only models: whether the intermediary is shared or personalized, and whether author outputs affect its future behavior. These choices determine whether the system converges to one shared norm or to a family of author-specific equilibria.\looseness=-1

\smallskip
\noindent
\emph{Model collapse, knowledge collapse, and recursive feedback.}
Recursive training on model-generated data can degrade generative models and reduce coverage of the tails of the original data distribution~\citep{shumailov2024ai}. Related work studies how AI-mediated information environments may narrow the range of available knowledge and loss of tails~\citep{peterson2025ai,wright2025epistemic}. This literature centers on the quality or epistemic diversity of model outputs, whereas we study the linguistic diversity of human authors who adapt to and may shape a shared model.

\smallskip
\noindent
\emph{Algorithmic monoculture.}
\citet{kleinberg2021algorithmic} show that reliance on common algorithmic advice can be individually beneficial while reducing social welfare. Recent work by \citet{kleinberg2026price} quantifies this loss in matching markets, proving a tight price-of-anarchy bound of factor $2$. Our work studies a linguistic analogue: shared LLM assistance may improve individual legibility while reducing population-level linguistic diversity.

\noindent
\emph{Communication accommodation and personalization.}
Communication accommodation theory studies how speakers adjust their language toward or away from interlocutors~\citep{bell1984language,giles2016communication}, including in human–LLM interaction~\citep{chang2025aaai-communication}. Related language-game models examine how shared conventions emerge through peer-to-peer interaction~\citep{chen2019naming,de2006how}. In contrast, we study an LLM as a shared or personalized algorithmic interlocutor and characterize when personalization can preserve diversity across authors.

\section{A Mathematical Framework for LLM-Assisted Language Use}\label{sec:framework}
We use $[j]$ to denote $\{1,\ldots,j\}$ for $j\in\mathbb{N}$, and $\Delta^{m-1} := \{ x \in \mathbb{R}_{\ge 0}^m : \sum_{k=1}^m x_k = 1 \}$ to denote an $(m-1)$-dimensional probability simplex.
Let $\mathcal{F}=[m]$ be a common set of linguistic features encoding lexical choices, syntactic constructions, discourse markers or other stylistic patterns. We represent linguistic style as a probability distribution over $\mathcal{F}$. For each author $i\in[n]$ and time $t\in\{0,\ldots,T\}$, let $p_i^t\in\Delta^{m-1}$ denote author $i$'s linguistic-style distribution, with initial distribution $p_i^0$. 
Let $q^t\in\Delta^{m-1}$ denote the prompt-conditioned linguistic-style distribution induced by an LLM system at time $t$ under a fixed class (or distribution) of writing prompts; if personalized to author $i$, we write $q_i^t$. This is an output-level abstraction rather than a model-parameter representation. For notational uniformity, in shared-model settings we write $q_i^t:=q^t$ for every author $i$.

At each time step $t\in\{0,\ldots,T-1\}$, author $i$ prompts the LLM,
observes LLM-generated linguistic suggestions and updates their own style as
\begin{equation} \label{eq:author_update}
	p_i^{t+1} = \underbrace{(1-\alpha_i)\, p_i^t}_{\substack{\text{Style}\\ \text{retention}}}
	+ \underbrace{\alpha_i\, \Acal_i(q_i^t,z_i^t)}_{\substack{\text{Model- and}\\ \text{context-driven update}}},
\end{equation}
where $\alpha_i\in[0,1]$ measures the rate at which author $i$ adapts their linguistic style. The operator $\Acal_i:\Delta^{m-1}\times\mathcal Z\to\Delta^{m-1}$ describes how the author incorporates the model's output given an author-specific adaptation input $z_i^t\in\mathcal Z$, which may encode external incentives, preferred style, or other contextual information.

The induced output distribution may itself evolve when author-produced text is incorporated into future training data, a recursive feedback mechanism studied in the model-collapse literature~\citep{shumailov2024ai}. We model this in reduced form as
\begin{equation} \label{eq:model_update}
	q_i^{t+1} =  \underbrace{\beta q_i^t}_{\substack{\text{Model}\\ \text{retention}}} +
	\underbrace{(1-\beta)\, \Bcal(p_1^{t+1},\ldots,p_n^{t+1})}_{\substack{\text{Feedback from}\\ \text{author outputs}}},
\end{equation}
where $\beta \in [0,1]$ is the retention parameter: larger $\beta$ places more weight on the past model distribution. The operator $\Bcal$ maps the updated author distributions to the distribution of training data used to update the model; we instantiate this to be the weighted population average $\Bcal(p_1^{t+1},\ldots,p_n^{t+1})= \sum_{i=1}^n w_i p_i^{t+1}$ with $w_i\ge 0$, $\sum_i w_i = 1$.

Our main object of interest is population-level linguistic diversity, measured by the average pairwise Jensen--Shannon divergence~\citep{lin1991divergence} among author distributions:
\begin{align*}
	D^t &= \frac{1}{n(n-1)} \sum_{i\neq j} JS(p_i^t,p_j^t),
\end{align*}
where $JS(p,q) = \frac{1}{2}KL(p\|x) + \frac{1}{2}KL(q\|x)$ with $x=\frac{p+q}{2}$; smaller values indicate convergence toward a common linguistic norm. When the model distribution evolves, we also track the average author--model divergence
\[
M^t = \frac{1}{n}\sum_{i=1}^n JS(p_i^t,q_i^t),
\]
which measures the alignment between each author and the model they interact with; in shared-model settings, $M^t\to 0$ corresponds to a shared human-LLM linguistic equilibrium. Finally, when each author interacts with a personalized model, we track diversity among the personalized models,
\[
Q^t = \frac{1}{n(n-1)} \sum_{i\neq j} JS(q_i^t,q_j^t),
\]
where $Q^t\to 0$ means all personalized model distributions become indistinguishable. In the personalized setting, $D^t$ measures diversity among authors, $Q^t$ diversity among personalized models and $M^t$ alignment between each author and their model. Since Jensen-Shannon divergence is symmetric and finite for probability distributions, it is a natural choice for all three measures. \Cref{rem:distributional} in Appendix~\ref{app:remarks} justifies our choice to represent linguistic style as a distribution rather than a point in a high-dimensional feature space.

\medskip
\subsection{Author-LLM interaction mechanisms}
We formalize three interaction mechanisms (IMs) that introduce increasing degrees of author--model coupling.

\begin{mechanism}[Shared model with fixed distribution]
	\label{problem1}
	All authors interact with the same model, whose linguistic-style distribution remains fixed over time, \ie, $q^t=q^0$ for all $t\in\{0,\ldots,T\}$. The linguistic-style distribution of author $i$ evolves as
	\[
	p_i^{t+1} = (1-\alpha_i)\,p_i^t + \alpha_i\,\Acal_i(q^0,z_i^t).
	\]
\end{mechanism}

\begin{mechanism}[Shared model with recursive updates]
	\label{problem2}
	All authors interact with the same model, but the model distribution evolves over time. The coupled author--model dynamics evolve as
	\begin{align*}
		p_i^{t+1} &= (1-\alpha_i)\,p_i^t + \alpha_i\, \Acal_i(q^t,z_i^t),\\
		q^{t+1} &= \beta\,q^t + (1-\beta)\,\Bcal(p_1^{t+1},\ldots,p_n^{t+1}),
	\end{align*}
	with $\Bcal(p_1^{t+1},\ldots,p_n^{t+1}) = \sum_{j=1}^n w_j p_j^{t+1}$ as the weighted-average instantiation, where $w_j\geq 0$ and $\sum_{j=1}^n w_j=1$. 
\end{mechanism}

\begin{mechanism}[Personalized models with recursive updates]
	\label{problem3}
	Each author $i$ interacts with an author-specific personalized model with linguistic-style distribution $q_i^t$, initialized with a common base distribution $q_i^0=q^0$ for all $i\in[n]$. The coupled author--model dynamics evolve as
	\begin{align*}
		p_i^{t+1} &= (1-\alpha_i)\,p_i^t+\alpha_i\,\Acal_i(q_i^t,z_i^t), \\
		q_i^{t+1} &= \beta\,q_i^t+(1-\beta)\,\Bcal_i(p_1^{t+1},\ldots,p_n^{t+1}),
	\end{align*}
	where the feedback operator $\Bcal_i$ mixes author $i$'s own updated distribution with the population-level update:
	\[
	\Bcal_i(p_1^{t+1},\ldots,p_n^{t+1}) = \rho\,p_i^{t+1} + (1-\rho)\sum_{j=1}^n w_j p_j^{t+1},
	\]
	with $\rho\in[0,1]$, $w_j\geq 0$, $\sum_{j=1}^n w_j=1$. Equivalently,
	\[ q_i^{t+1} = \beta \,q_i^t + \gamma \,p_i^{t+1} + \delta \sum_{j=1}^n w_j p_j^{t+1},
	\]
	where $\gamma=(1-\beta)\rho$, $\delta=(1-\beta)(1-\rho)$, and $\beta+\gamma+\delta=1$. The parameter $\rho$ determines the degree of personalization: when $\rho=1$, the model update depends only on author $i$'s own distribution; when $\rho=0$, all personalized models receive the same population-level feedback.
\end{mechanism}

In Appendix~\ref{app:im4}, we investigate Interaction Mechanism~4, which partitions a single population into three subpopulations, each following the update rule of one of IMs~1-3.

\section{Convergence of Linguistic Distributions}\label{sec:convergence}
We analyze the three interaction mechanisms in increasing order of coupling, establishing convergence as well as convergence-rate bounds. 

\subsection{A shared model with fixed distribution}
In \Cref{problem1} the model distribution is fixed, so $q^t=q^0$ for all $t$ and author $i$'s linguistic-style distribution evolves as
$p_i^{t+1} = (1-\alpha_i)\, p_i^t + \alpha_i\, \Acal_i(q^0,z_i^t)$.
First, we consider the model-induced adaptation to be homogeneous across authors and time, $\Acal_i(q^0,z_i^{t+1})=\Acal_i(q^0,z_i^t)=\Acal(q^0,z)$ for all $i$ and $t$, so all authors are influenced by the same fixed model distribution and have the same adaptation input.

\begin{restatable}[]{proposition}{proplinguisticconvergence} \label{prop:linguistic_convergence}
	In \Cref{problem1}, suppose the model-induced adaptation is homogeneous across authors and time, \ie,
	$\Acal_i(q^0,z_i^{t})=\Acal(q^0,z)$
	for every author $i$ and time $t$. Let
	$\alpha_{\min} = \min_{i \in [n]}\alpha_i>0$
	be the minimum adaptation rate. Then, for every $\varepsilon > 0$, it is sufficient to take
	$t \geq \frac{\log(2/\varepsilon)}{\alpha_{\min}}$ to guarantee $D^t \leq \varepsilon$.
\end{restatable}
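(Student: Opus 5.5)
Write $a:=\Acal(q^0,z)\in\Delta^{m-1}$ for the common, time-invariant target. The update for each author is then an affine contraction toward $a$: $p_i^{t+1}-a=(1-\alpha_i)(p_i^t-a)$. Unrolling gives $p_i^t-a=(1-\alpha_i)^t(p_i^0-a)$.

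We bound this in total variation (half the $\ell_1$ norm):
\[
\mathrm{TV}(p_i^t,a)=(1-\alpha_i)^t\,\mathrm{TV}(p_i^0,a)\le(1-\alpha_{\min})^t\le e^{-\alpha_{\min}t},
\]
using $\mathrm{TV}\le 1$ on the simplex and $1-x\le e^{-x}$. The triangle inequality then gives $\mathrm{TV}(p_i^t,p_j^t)\le 2e^{-\alpha_{\min}t}$ for every pair $i\neq j$.

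The second step converts the total-variation bound into a Jensen--Shannon bound. We will use the standard inequality $JS(p,q)\le \mathrm{TV}(p,q)$ (natural logarithm; the bound $JS\le \ln 2\cdot \mathrm{TV}$ also holds). It follows from writing
\[
JS(p,q)=\tfrac12\sum_k\Big[p_k\log\tfrac{2p_k}{p_k+q_k}+q_k\log\tfrac{2q_k}{p_k+q_k}\Big]
\]
and checking coordinatewise that each bracket is at most $|p_k-q_k|$. Concretely, the function $f(p_k,q_k)$ is jointly convex and $1$-homogeneous, so it suffices to check $f(1,s)\le 1-s$ for $s\in[0,1]$. That function is convex in $s$ and equals $\log 2\le 1$ at $s=0$ and $0$ at $s=1$. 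This yields $JS(p_i^t,p_j^t)\le 2e^{-\alpha_{\min}t}$.

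Averaging over ordered pairs gives $D^t\le 2e^{-\alpha_{\min}t}$. This is at most $\varepsilon$ exactly when $t\ge \log(2/\varepsilon)/\alpha_{\min}$, which matches the stated constant $2$.

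The only delicate step is the JS-versus-TV comparison. If proving the coordinatewise inequality becomes awkward, a fallback is to invoke joint convexity of $JS$ directly. Since $p_i^t=(1-\alpha_i)^t p_i^0+(1-(1-\alpha_i)^t)a$, convexity gives a bound on $JS(p_i^t,p_j^t)$ by mixtures. However, unequal $\alpha_i$ make the mixture weights differ across authors, so we would first pass through the common point $a$. This route is less clean, which is why the TV route is preferred. The TV route is also where the factor $2$ naturally arises, from the triangle inequality through $a$.
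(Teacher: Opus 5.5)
Your proof is correct and follows the paper's argument essentially step for step. You unroll the contraction toward $a$, bound each author's distance to $a$ (in total variation, where the paper works in $\ell_1$ and carries the factor $2$ explicitly), pass through $a$ by the triangle inequality, and convert with $JS(p,q)\le \tfrac12\|p-q\|_1$. The paper simply cites that last inequality as standard, whereas you verify it correctly, even with the sharper constant $\ln 2$, via joint convexity and homogeneity of the coordinatewise summand; your fallback route through joint convexity of $JS$ is not needed.
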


The homogeneity assumption is strong: all authors are pulled toward the same distribution. Thus \Cref{prop:linguistic_convergence} gives a baseline showing that common adaptation and rewards cause diversity to decay exponentially, at a rate set by the slowest-adapting author. Our next result relaxes the homogeneity assumption with author-specific adaptation.

\begin{restatable}[]{proposition}{propauthorspecificfixedadaptation} \label{prop:author_specific_fixed_adaptation}
	Consider \Cref{problem1} with fixed model distribution $q^t=q^0$. Suppose that the adaptation operator is time-invariant but author-specific, \ie, for each author $i$, $\Acal_i(q^0,z_i^t)=a_i$ for all $t$, where
	$a_i\in\Delta^{m-1}$, and let $\alpha_{\min}=\min_{i\in[n]}\alpha_i>0$.
	Then each author's linguistic-style distribution converges to its author-specific adapted distribution, $p_i^t \to a_i$, exponentially fast at a rate controlled by $\alpha_{\min}$. Consequently,
	\[
	D^t \to D^\infty := \frac{1}{n(n-1)} \sum_{i\neq j} JS(a_i,a_j),
	\]
	and \(D^t\to 0\) if and only if \(a_i=a_j\) for all \(i,j\).
\end{restatable}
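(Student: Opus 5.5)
The plan is to solve the linear recursion for each author, then pass the convergence through the continuity of the Jensen--Shannon divergence. Fix an author $i$. Since $\Acal_i(q^0,z_i^t)=a_i$ is constant in $t$, the update in \Cref{problem1} is an affine contraction toward $a_i$. Subtracting $a_i$ from both sides gives
\[
p_i^{t+1}-a_i=(1-\alpha_i)(p_i^t-a_i), \qquad\text{hence}\qquad p_i^t-a_i=(1-\alpha_i)^t(p_i^0-a_i).
\]
Taking the $\ell_1$ norm and using $\|p_i^0-a_i\|_1\le 2$ together with $1-\alpha_i\le 1-\alpha_{\min}\le e^{-\alpha_{\min}}$, I obtain the uniform bound $\|p_i^t-a_i\|_1\le 2e^{-\alpha_{\min}t}$ for every $i$. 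This is the same mechanism as in \Cref{prop:linguistic_convergence}, with the common target replaced by author-specific targets. It establishes $p_i^t\to a_i$ at an exponential rate governed by $\alpha_{\min}$.

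Next I show $D^t\to D^\infty$. The function $JS$ is jointly continuous on $\Delta^{m-1}\times\Delta^{m-1}$, since it is a finite combination of terms $x\log x$ with bounded arguments. Therefore $JS(p_i^t,p_j^t)\to JS(a_i,a_j)$ for each pair, and averaging over the finitely many pairs gives the limit. If a quantitative rate for $D^t$ is also desired, I would use the bound $|JS(p,q)-JS(p',q')|\le$ a modulus of continuity of order $\|\cdot\|_1\log(1/\|\cdot\|_1)$, which follows from the standard continuity of entropy (Fannes-type inequality) applied to $JS(p,q)=H(\tfrac{p+q}{2})-\tfrac12H(p)-\tfrac12H(q)$. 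This yields a rate of $O(t\,e^{-\alpha_{\min}t})$ up to constants depending on $m$. The statement only asks for the limit, so continuity suffices. I expect this to be the main step requiring care, because $JS$ is not Lipschitz near the boundary of the simplex. The entropy-decomposition route avoids that issue cleanly.

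Finally, the characterization that $D^t\to0$ if and only if $a_i=a_j$ for all $i,j$ follows from $D^t\to D^\infty$ and the fact that $JS(p,q)\ge0$, with equality iff $p=q$. Since $JS(p,q)$ is a sum of two KL divergences to the midpoint, it vanishes iff $p=\tfrac{p+q}{2}=q$. Hence $D^\infty$, being an average of nonnegative terms, is zero iff every pair satisfies $a_i=a_j$.
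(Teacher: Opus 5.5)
Your proposal is correct and follows essentially the same argument as the paper: solve the affine recursion to get $\|p_i^t-a_i\|_1\le 2e^{-\alpha_{\min}t}$, pass to the limit in $D^t$ using continuity of JS on the simplex, and obtain the characterization from the fact that JS is nonnegative and vanishes only when its arguments are equal. The Fannes-type rate you sketch for $D^t$ goes beyond what the paper proves, but it is not needed for the statement.
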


When adaptation is author-specific but fixed over time, authors do not collapse to a single linguistic-style distribution: each converges to their own adaptation target, and population-level diversity converges to the diversity among these targets---author-specific incentives preserve diversity even under repeated exposure to the same fixed model. If $D^\infty>0$, the process stabilizes at a positive level of diversity.\looseness=-1

\xhdr{Interplay between diversity and conformity}
To interpolate between these regimes, let each author balance a conformity incentive pulling toward the shared model-induced norm against a diversity incentive preserving author-specific style.
For the remainder of this section, let $r_i\in\Delta^{m-1}$ denote author $i$'s fixed preferred linguistic-style distribution, and set $z_i^t=r_i$ for all $t$.
We model the adaptation as
\[
\Acal_i(q^0,r_i) = (1-\lambda_i)r_i+\lambda_i q^0, \qquad \lambda_i\in[0,1],
\]
where $\lambda_i$ measures the strength of the conformity incentive: $\lambda_i=0$ preserves the preferred style $r_i$, while $\lambda_i=1$ fully adopts the shared model distribution $q^0$.

\begin{restatable}[]{proposition}{propfixedconformitymixture} \label{prop:fixed_conformity_mixture}
	Consider \Cref{problem1} with fixed model distribution $q^t=q^0$. Suppose each author has an author-specific preferred distribution $r_i\in\Delta^{m-1}$, and the adaptation operator is
	\[ \Acal_i(q^0,r_i) = (1-\lambda_i)r_i+\lambda_i q^0, \qquad \lambda_i\in[0,1].
	\]
	Then each author distribution converges to
	$a_i(\lambda_i) := (1-\lambda_i)\,r_i+\lambda_i q^0$. Precisely, if $\alpha_{\min} :=\min_{i \in [n]} \alpha_i > 0$
	then for every $\varepsilon>0$, it is sufficient to take $t \geq \frac{\log(2/\varepsilon)}{\alpha_{\min}}$ to guarantee that
	$
	\max_{i \in [n]} \|p_i^t - a_i(\lambda_i)\|_1 \leq \varepsilon.
	$
	Consequently,
	\[
	D^t \to D^\infty(\lambda) := \frac{1}{n(n-1)} \sum_{i\neq j}
	JS\left(a_i(\lambda_i),a_j(\lambda_j)\right).
	\]
	In particular, $D^t\to 0$ if and only if $a_i(\lambda_i)=a_j(\lambda_j)$ for all $i,j$. A sufficient condition is $\lambda_i=1$ for all $i$, in which case $a_i(\lambda_i)=q^0$ for every author $i$.
\end{restatable}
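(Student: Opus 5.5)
This is a direct specialization of \Cref{prop:author_specific_fixed_adaptation}. The plan is to make the contraction explicit in order to get the stated iteration count, and then pass the JS divergence to the limit.

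\textbf{Step 1: reduce to a fixed target.} Since $q^t=q^0$ and $r_i$ are fixed, the adapted distribution $a_i(\lambda_i)=(1-\lambda_i)r_i+\lambda_i q^0$ is time-invariant. It lies in $\Delta^{m-1}$ as a convex combination. The update then reads $p_i^{t+1}=(1-\alpha_i)p_i^t+\alpha_i a_i(\lambda_i)$. Subtracting $a_i(\lambda_i)$ from both sides gives the exact recursion
\[
p_i^{t+1}-a_i(\lambda_i)=(1-\alpha_i)\bigl(p_i^t-a_i(\lambda_i)\bigr),
\]
and hence $p_i^t-a_i(\lambda_i)=(1-\alpha_i)^t\bigl(p_i^0-a_i(\lambda_i)\bigr)$.

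\textbf{Step 2: rate.} Take $\ell_1$ norms. The difference of two distributions satisfies $\|p_i^0-a_i(\lambda_i)\|_1\le 2$, and $1-\alpha_i\le 1-\alpha_{\min}\le e^{-\alpha_{\min}}$. Therefore
\[
\max_i\|p_i^t-a_i(\lambda_i)\|_1\le 2e^{-\alpha_{\min}t},
\]
which is at most $\varepsilon$ once $t\ge \log(2/\varepsilon)/\alpha_{\min}$. If $\alpha_i=1$, the bound is trivially $0$ for $t\ge1$.

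\textbf{Step 3: limit of diversity.} $JS$ is continuous on $\Delta^{m-1}\times\Delta^{m-1}$. It is a finite sum of terms $u\log\bigl(2u/(u+v)\bigr)$, each extended continuously by $0$ at $u=0$. So $p_i^t\to a_i(\lambda_i)$ gives $JS(p_i^t,p_j^t)\to JS\bigl(a_i(\lambda_i),a_j(\lambda_j)\bigr)$ for every pair, and averaging over the finitely many pairs yields $D^t\to D^\infty(\lambda)$.

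\textbf{Step 4: characterize $D^\infty(\lambda)=0$.} $D^\infty(\lambda)$ is an average of nonnegative terms. $JS(p,q)=0$ iff $p=q$, since each KL term vanishes iff its argument equals the midpoint. Hence $D^t\to0$ iff $a_i(\lambda_i)=a_j(\lambda_j)$ for all $i,j$. Setting $\lambda_i=1$ gives $a_i=q^0$ for every $i$, which is the stated sufficient condition.

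\textbf{Main obstacle.} The only real care point is the continuity of $JS$ at the boundary of the simplex, where coordinates can be zero. It is handled by the convention $0\log 0=0$ together with the boundedness of the ratio $2u/(u+v)\le 2$. If a quantitative rate for $D^t$ were wanted, I would instead try a Lipschitz/Pinsker-type bound on $JS$ in terms of $\ell_1$, but the statement asks only for convergence of $D^t$.
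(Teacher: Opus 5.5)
Your proof is correct and follows the paper's argument essentially step for step: the exact recursion $p_i^t-a_i(\lambda_i)=(1-\alpha_i)^t\bigl(p_i^0-a_i(\lambda_i)\bigr)$, the $\ell_1$ bound $2e^{-\alpha_{\min}t}$ via $1-x\le e^{-x}$, continuity of $JS$ on the simplex to obtain $D^t\to D^\infty(\lambda)$, and $JS(p,q)=0$ iff $p=q$ for the characterization of $D^t\to 0$. The paper's proof also shows two further facts for a common conformity level $\lambda$, neither of which the statement requires: $D^\infty(\lambda)$ is nonincreasing in $\lambda$ (by joint convexity of $JS$), and $D^\infty(\lambda)=O(1-\lambda)$ (by $JS\le\frac12\|\cdot\|_1$).
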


The proof establishes that for a common conformity level $\lambda_i=\lambda$, the limiting diversity satisfies $D^\infty(\lambda)=O(1-\lambda)$.
The parameter $\lambda$ represents institutional or reward-based pressure to conform to a shared model-induced linguistic norm: when $\lambda$ is small, author-specific preferences dominate and diversity persists; when $\lambda$ is large, the shared norm dominates and diversity collapses. Even with author-specific adaptation, monoculture can emerge if the reward structure places sufficiently high weight on conformity.

\subsection{A shared model with recursive updates}
We next allow the shared model to be retrained on author outputs. Writing $P^{t+1}:=\Bcal(p_1^{t+1},\ldots,p_n^{t+1}) = \sum_{i=1}^n w_i p_i^{t+1}$, the coupled dynamics of \Cref{problem2} are
\begin{align*}
	p_i^{t+1} &= (1-\alpha_i)\, p_i^t + \alpha_i\, \Acal_i(q^t,z_i^t), \\
	q^{t+1} &= \beta q^t + (1-\beta)\, P^{t+1}.
\end{align*}
If the adaptation target does not depend on the evolving model distribution ($\Acal_i(q^t,z_i^t)=a$ or $a_i$), the author dynamics are unchanged from the fixed-model setting and \Cref{prop:linguistic_convergence,prop:author_specific_fixed_adaptation} apply verbatim: recursive feedback alone does not force monoculture when authors are pulled toward fixed targets. Recursive updates matter only when the adaptation operator depends non-trivially on $q^t$. We therefore consider the conformity-mixture adaptation
\[
\Acal_i(q^t,r_i)=(1-\lambda_i)r_i+\lambda_i q^t,
\qquad \lambda_i\in[0,1],
\]
where the author's preferred distribution $r_i$ is fixed over time and $\lambda_i$ now measures conformity to the \emph{current} model distribution.

\begin{restatable}[]{proposition}{propsharedrecursiveconformity} \label{prop:shared_recursive_conformity}
	Consider \Cref{problem2} with
	$\Acal_i(q^t,r_i)=(1-\lambda_i)r_i+\lambda_i q^t$ and $\lambda_i\in[0,1)$.
	Suppose that $\alpha_i\in(0,1]$ for every author $i$, $\beta\in[0,1)$, and
	$P^{t+1}=\sum_{i=1}^n w_i p_i^{t+1}$, $w_i\geq 0$, $\sum_{i=1}^n w_i=1$.
	Then the coupled author--model dynamics converge to an equilibrium
	$(p_1^\ast,\ldots,p_n^\ast,q^\ast)$ satisfying
	\[
	p_i^\ast=(1-\lambda_i)r_i+\lambda_i q^\ast
	\]
	\[
	\text{for every author $i$, and} \quad
	q^\ast = \frac{\sum_{i=1}^n w_i(1-\lambda_i)r_i} {1-\sum_{i=1}^n w_i\lambda_i}.
	\]
	\[
	\text{Consequently,} \quad
	D^t\to D^\ast := \frac{1}{n(n-1)} \sum_{i\neq j} JS(p_i^\ast,p_j^\ast).
	\]
	In particular, the limiting diversity is the diversity among the equilibrium
	author distributions $p_i^\ast$, rather than the diversity among the fixed-model
	targets $(1-\lambda_i)\,r_i+\lambda_i\,q^0$.
\end{restatable}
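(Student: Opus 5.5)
We will treat the coupled dynamics as an affine iteration on the joint state and show it is a contraction, with the claimed equilibrium as its unique fixed point. Substituting the author update into the model update gives
\[
q^{t+1} = \beta q^t + (1-\beta)\sum_i w_i\bigl[(1-\alpha_i)p_i^t + \alpha_i(1-\lambda_i)r_i + \alpha_i\lambda_i q^t\bigr].
\]
So $x^t=(p_1^t,\ldots,p_n^t,q^t)$ satisfies $x^{t+1}=Mx^t+c$ for a fixed matrix $M$ acting blockwise (as scalar multiples of the identity on $\mathbb{R}^m$) and a constant vector $c$.

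\textbf{Step 1: the equilibrium.} Set $p_i^{t+1}=p_i^t=p_i^\ast$ and $q^{t+1}=q^t=q^\ast$. Because $\alpha_i>0$, the author equation gives $p_i^\ast=(1-\lambda_i)r_i+\lambda_i q^\ast$. Because $\beta<1$, the model equation gives $q^\ast=\sum_i w_ip_i^\ast$. Substituting the first into the second yields $(1-\sum_i w_i\lambda_i)q^\ast=\sum_i w_i(1-\lambda_i)r_i$. Since every $\lambda_i<1$ and $\sum_i w_i=1$, we have $\sum_i w_i\lambda_i<1$, so $q^\ast$ is well defined and unique. It lies in the simplex: its coefficients on the $r_i$ are nonnegative and sum to $(1-\sum_i w_i\lambda_i)/(1-\sum_i w_i\lambda_i)=1$. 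Each $p_i^\ast$ is then a convex combination of points of the simplex, so it lies there too.

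\textbf{Step 2: contraction of the errors.} Write $e_i^t=p_i^t-p_i^\ast$ and $f^t=q^t-q^\ast$. The constants cancel, leaving
\[
e_i^{t+1}=(1-\alpha_i)e_i^t+\alpha_i\lambda_i f^t,
\qquad
f^{t+1}=\beta f^t+(1-\beta)\sum_i w_i e_i^{t+1}.
\]
Track the scalar error $E^t=\max\bigl(\max_i\|e_i^t\|_1,\|f^t\|_1\bigr)$. The author step gives
\[
\|e_i^{t+1}\|_1\le\bigl(1-\alpha_i(1-\lambda_i)\bigr)E^t\le\kappa E^t,
\qquad
\kappa:=1-\min_i\alpha_i(1-\lambda_i)<1.
\]
The model step then gives $\|f^{t+1}\|_1\le\beta E^t+(1-\beta)\kappa E^t\le E^t$. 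This only shows non-expansion, and this is the step I expect to be the main obstacle: $f$ by itself need not contract in one step. Two-step accounting fixes it. After one step every $e_i$ has contracted by $\kappa$. Then
\[
\|f^{t+2}\|_1\le\beta\|f^{t+1}\|_1+(1-\beta)\kappa E^{t+1}\le\bigl(\beta+(1-\beta)\kappa\bigr)E^t,
\]
and also $\|e_i^{t+2}\|_1\le\kappa E^{t+1}\le\kappa E^t$. Hence $E^{t+2}\le\bigl(\beta+(1-\beta)\kappa\bigr)E^t$, and the factor is strictly below $1$ because $\beta<1$ and $\kappa<1$. Therefore $E^t\to0$ geometrically. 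Alternatively, one could bound the spectral radius of the $(n+1)\times(n+1)$ coefficient matrix via a weighted max-norm. The two-step bound is simpler, and it also gives an explicit rate.

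\textbf{Step 3: diversity limit.} The Jensen--Shannon divergence is continuous on $\Delta^{m-1}\times\Delta^{m-1}$, since it is bounded and built from $x\log x$ terms with midpoints. Continuity and $p_i^t\to p_i^\ast$ give $JS(p_i^t,p_j^t)\to JS(p_i^\ast,p_j^\ast)$ for every pair, and hence $D^t\to D^\ast$. The concluding remark of the statement follows by comparison with \Cref{prop:fixed_conformity_mixture}: the limits here are $(1-\lambda_i)r_i+\lambda_iq^\ast$ with $q^\ast$ in place of $q^0$, and in general $q^\ast\neq q^0$.
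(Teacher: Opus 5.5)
Your proof is correct and follows the paper's argument: the same fixed-point computation for $(p_i^\ast,q^\ast)$, the same error recursion, and the same max-type $\ell_1$ potential. One simplification: your one-step bound already reads $\|f^{t+1}\|_1\le(\beta+(1-\beta)\kappa)E^t$ with $\beta+(1-\beta)\kappa=1-(1-\beta)\min_i\alpha_i(1-\lambda_i)<1$, so $E^{t+1}\le(\beta+(1-\beta)\kappa)E^t$ holds directly, as in the paper, and the two-step detour (which only yields the square root of this rate per step) is unnecessary.
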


The proof also shows that convergence is exponential, with a worst-case time bound of
$t \geq \frac{\log(2/\varepsilon)}{(1-\beta)\min_{i \in [n]} \alpha_i(1-\lambda_i)}$,
which is weaker than the fixed-model bound, controlled by $\alpha_{\min}$ alone: the moving target $q^t$ can slow the guaranteed rate, though the bound need not be tight in typical instances (see \Cref{sec:experiments}). The condition $\lambda_i<1$ ensures that each author retains some pull toward their preferred distribution $r_i$; if $\lambda_i=1$ for all authors, the fixed-point formula for $q^\ast$ becomes singular and that case must be treated separately.

The weaker rate bound should not be read as recursive feedback dampening homogenization: recursion changes the equilibrium, anchoring authors at the endogenous $q^\ast$ rather than the exogenous $q^0$. \Cref{prop:common_conformity} makes the resulting comparison precise.

\subsection{Personalized models with recursive updates}
Personalization changes what the system converges \emph{to}: because each model update includes an author-specific feedback component, the limiting model distributions $q_i^\ast$ can remain distinct. The system may therefore converge to a family of author-specific equilibria rather than a single shared norm. The parameter $\rho$ controls the extent of personalization.

\begin{restatable}[]{proposition}{proppersonalizedrecursive} \label{prop:personalized_recursive}
	Consider \Cref{problem3} with author and model updates
	\begin{align*}
		p_i^{t+1} &= (1-\alpha_i)\,p_i^t+\alpha_i\,\Acal_i(q_i^t,r_i), \\
		q_i^{t+1} &= \beta q_i^t+\gamma p_i^{t+1} +\delta\sum_{j=1}^n w_jp_j^{t+1},
	\end{align*}
	where $\alpha_i\in(0,1]$, $\Acal_i(q_i^t,r_i)=(1-\lambda_i)r_i+\lambda_i q_i^t$ with $\lambda_i\in[0,1)$, $w_j\geq 0$, $\sum_{j=1}^n w_j=1$, and $\beta+\gamma+\delta=1$.
	Suppose that $\beta\in[0,1)$, and let
	$\rho=\frac{\gamma}{1-\beta}\in[0,1]$. Then the coupled personalized author--model dynamics converge to an equilibrium
	$(p_1^\ast,\ldots,p_n^\ast,q_1^\ast,\ldots,q_n^\ast)$ with
	\[
	p_i^\ast=\eta_i r_i+(1-\eta_i)P^\ast, \quad q_i^\ast=\rho\eta_i r_i+(1-\rho\eta_i)P^\ast,
	\]
	where $\eta_i=\frac{1-\lambda_i}{1-\rho\lambda_i}$ and
	$P^\ast = \frac{\sum_{i=1}^n w_i\eta_i r_i}{\sum_{i=1}^n w_i\eta_i}$.
	Consequently,
	\[
	D^t\to D^\ast:= \frac{1}{n(n-1)} \sum_{i\neq j}JS(p_i^\ast,p_j^\ast),
	\]
	\[Q^t\to Q^\ast:= \frac{1}{n(n-1)} \sum_{i\neq j}JS(q_i^\ast,q_j^\ast).
	\]
\end{restatable}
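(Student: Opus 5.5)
The dynamics are an affine, time-invariant iteration on $(p_1,\dots,p_n,q_1,\dots,q_n)$. The plan is to split them into a \emph{population-average} component and \emph{deviations} from it, prove each converges geometrically, and then pass to $D^t,Q^t$ by continuity of $JS$.

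\textbf{Step 1 (fixed point).} Set $p_i=p_i^\ast$, $q_i=q_i^\ast$ in the updates. The author equation gives $p_i^\ast=(1-\lambda_i)r_i+\lambda_i q_i^\ast$. The model equation with $\beta<1$ gives $q_i^\ast=\rho p_i^\ast+(1-\rho)P^\ast$, where $P^\ast=\sum_j w_jp_j^\ast$. Substituting,
\[
p_i^\ast(1-\rho\lambda_i)=(1-\lambda_i)r_i+\lambda_i(1-\rho)P^\ast .
\]
Since $1-\lambda_i(1-\rho)/(1-\rho\lambda_i)=\eta_i$, this is $p_i^\ast=\eta_ir_i+(1-\eta_i)P^\ast$. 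Averaging with weights $w_i$ gives $P^\ast\sum_i w_i\eta_i=\sum_i w_i\eta_ir_i$. Note $\eta_i>0$ because $\lambda_i<1$, so the denominator is positive whenever some $w_i>0$. The formula for $q_i^\ast$ follows from $q_i^\ast=\rho p_i^\ast+(1-\rho)P^\ast$. All these points lie in the simplex because they are convex combinations.

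\textbf{Step 2 (error dynamics).} Define errors $e_i^t=p_i^t-p_i^\ast$, $f_i^t=q_i^t-q_i^\ast$, and $E^t=\sum_j w_je_j^t$. They satisfy
\[
e_i^{t+1}=(1-\alpha_i)e_i^t+\alpha_i\lambda_i f_i^t,\qquad f_i^{t+1}=\beta f_i^t+\gamma e_i^{t+1}+\delta E^{t+1}.
\]
I would bound a weighted norm of the coupled error. A natural Lyapunov quantity is $V^t=\max_i\max(\|e_i^t\|_1,\|f_i^t\|_1)$. The author map is a convex combination whose $f$-coefficient is $\alpha_i\lambda_i$. The model map is a convex combination of $f_i$, $e_i$ and $E$, with total weight $1-\beta$ on the $e$-terms.

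Plain max-norm gives only non-expansion, so I will compose two steps, or equivalently use a weighted norm $\max(\|e_i\|,c\|f_i\|)$, to extract a strict contraction. The target factor is $1-(1-\beta)\min_i\alpha_i(1-\lambda_i)$, mirroring the rate stated after \Cref{prop:shared_recursive_conformity}. An alternative that I expect to be cleaner is a spectral argument. The error operator is a block matrix tensored with $I_m$. It is substochastic-like: the total mass leaving through the $(1-\lambda_i)r_i$ channel makes it strictly contractive in a suitable weighted $\ell_\infty$ norm, since every row eventually routes weight $\alpha_i(1-\lambda_i)>0$ to the constant term. Hence its spectral radius is below $1$, and geometric convergence $e_i^t,f_i^t\to0$ follows.

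\textbf{Step 3 (diversity limits).} $JS$ is continuous on $\Delta^{m-1}\times\Delta^{m-1}$. Therefore $p_i^t\to p_i^\ast$ and $q_i^t\to q_i^\ast$ imply $D^t\to D^\ast$ and $Q^t\to Q^\ast$.

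The main obstacle is Step 2: producing an honest strict contraction despite the cross-coupling through $E^t$. The one-step max-norm bound is only $\le1$, because $\|e_i^{t+1}\|\le(1-\alpha_i(1-\lambda_i))V^t$ but $\|f_i^{t+1}\|$ can equal $V^t$ when $\beta$ dominates. I would first try the two-step composition: $f$ contracts once it absorbs the already-contracted $e^{t+1}$ terms. This gives $\|f_i^{t+1}\|\le(1-(1-\beta)\min_j\alpha_j(1-\lambda_j))V^t$, so $V^{t+1}\le(1-(1-\beta)\min_j\alpha_j(1-\lambda_j))V^t$ directly. Iterating this with $\|\cdot\|_1\le2$ on the simplex gives the $\log(2/\varepsilon)$-type bound.
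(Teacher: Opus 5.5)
Your proposal is correct and, in its final form, is the paper's proof: you solve the fixed-point equations for $p_i^\ast$, $q_i^\ast$, $P^\ast$ as the paper does, then bound $V^t=\max_i\max(\|e_i^t\|_1,\|f_i^t\|_1)$ by feeding the already-contracted $e^{t+1}$ into the model update to get $V^{t+1}\le\bigl(1-(1-\beta)\min_i\alpha_i(1-\lambda_i)\bigr)V^t$, and conclude by continuity of JS. One correction to your framing: this is already a one-step contraction (the model update uses $p^{t+1}$, not $p^t$), so your claim that the one-step max-norm bound is only non-expansive is wrong, and the two-step composition, weighted norm and spectral detours are unnecessary.
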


When $\rho>0$, each equilibrium model distribution $q_i^\ast$ retains an author-specific component proportional to $\rho\eta_i$, so both author diversity $D^\ast$ and personalized-model diversity $Q^\ast$ can remain bounded away from zero; when $\rho=0$, all personalized models collapse to the same population-level distribution. Personalization thus does not prevent convergence---it changes its object, from a single shared norm to a family of distinct author--model equilibria.

Because JS depends on both pairwise separation and the cloud's location in the simplex (\Cref{rem:quadratic}), we isolate pairwise geometry using the translation-invariant quadratic diversity
\[
\widehat{D}(p_1,\ldots,p_n):=\frac{1}{2n(n-1)}\sum_{i\neq j}\|p_i-p_j\|_2^2,
\]
and write $\widehat{D}_\infty$ for its value at equilibrium.

\begin{restatable}[]{proposition}{propcommonconformity} \label{prop:common_conformity}
    Suppose $\lambda_i=\lambda\in[0,1)$ for every author $i$, and let
    $\eta=\frac{1-\lambda}{1-\rho\lambda}$. Then, for every pair $i\neq j$,
    \[
    a_i(\lambda)-a_j(\lambda)=(1-\lambda)(r_i-r_j)
    \quad\text{under IM~1},
    \]
    \[
    p_i^\ast-p_j^\ast=(1-\lambda)(r_i-r_j)\quad\text{under IM~2},\]
    \[
    p_i^\ast-p_j^\ast=\eta(r_i-r_j)\quad\text{under IM~3}.
    \]
    Consequently, IMs~1 and~2 have identical limiting quadratic diversity, while
    \[
    \widehat{D}_\infty(\mathrm{IM~3})
    =\frac{\widehat{D}_\infty(\mathrm{IM~1})}{(1-\rho\lambda)^2}
    \geq\widehat{D}_\infty(\mathrm{IM~1}),
    \]
    with equality if and only if $\rho\lambda=0$.
\end{restatable}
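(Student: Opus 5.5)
The plan is to read each pairwise difference directly off the equilibrium formulas already established, and then pass to quadratic diversity by homogeneity of the squared norm.

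For IM~1, \Cref{prop:fixed_conformity_mixture} gives the limit $a_i(\lambda)=(1-\lambda)r_i+\lambda q^0$. The common term $\lambda q^0$ cancels in $a_i(\lambda)-a_j(\lambda)$, leaving $(1-\lambda)(r_i-r_j)$.

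For IM~2, \Cref{prop:shared_recursive_conformity} gives $p_i^\ast=(1-\lambda)r_i+\lambda q^\ast$. Since $q^\ast$ is shared by all authors, it cancels in the same way, again leaving $(1-\lambda)(r_i-r_j)$. The particular value $q^\ast=\sum_i w_i r_i$ (obtained by substituting the common $\lambda$) is irrelevant here; it only affects the location of the point cloud, not the pairwise differences.

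For IM~3, \Cref{prop:personalized_recursive} gives $p_i^\ast=\eta_i r_i+(1-\eta_i)P^\ast$. With a common $\lambda$, every $\eta_i$ equals $\eta=\frac{1-\lambda}{1-\rho\lambda}$, so the coefficient $(1-\eta)$ on $P^\ast$ is also common. Hence $P^\ast$ cancels and $p_i^\ast-p_j^\ast=\eta(r_i-r_j)$.

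Next I substitute these into $\widehat D$. Since $\|c\,v\|_2^2=c^2\|v\|_2^2$, we get
\[
\widehat D_\infty(\mathrm{IM~1})=\widehat D_\infty(\mathrm{IM~2})=(1-\lambda)^2\,\widehat D(r_1,\ldots,r_n),
\]
and $\widehat D_\infty(\mathrm{IM~3})=\eta^2\,\widehat D(r_1,\ldots,r_n)$. Taking the ratio gives
\[
\widehat D_\infty(\mathrm{IM~3})=\frac{\widehat D_\infty(\mathrm{IM~1})}{(1-\rho\lambda)^2}.
\]
Because $\rho\in[0,1]$ and $\lambda\in[0,1)$, we have $0\le\rho\lambda<1$. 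So $0<1-\rho\lambda\le 1$, which yields the inequality $\widehat D_\infty(\mathrm{IM~3})\ge\widehat D_\infty(\mathrm{IM~1})$. Moreover $1-\rho\lambda=1$ exactly when $\rho\lambda=0$.

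The one delicate point is the ``if and only if'' in the equality clause. If $\widehat D(r_1,\ldots,r_n)=0$, that is, all $r_i$ coincide, then equality holds trivially for every $\rho\lambda$. The statement therefore needs the implicit nondegeneracy assumption that not all $r_i$ are equal. I would state this assumption explicitly and then conclude: equality holds iff $(1-\rho\lambda)^{-2}=1$, iff $\rho\lambda=0$.

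Finally, I would note that the IM~1 quantities are limits of $p_i^t$ by \Cref{prop:fixed_conformity_mixture}, so all three limiting values are well defined.
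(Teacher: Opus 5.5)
Your proposal is correct and matches the paper's proof essentially line by line. You read the pairwise differences off the IM~1--3 equilibria, where the common anchor $q^0$, $q^\ast$, or $P^\ast$ cancels, and then use degree-two homogeneity of $\widehat{D}$ together with $\eta/(1-\lambda)=1/(1-\rho\lambda)$. Your nondegeneracy caveat is a correct refinement the paper leaves implicit: if all $r_i$ coincide, both sides are $0$, so the ``only if'' direction of the equality clause requires $\widehat{D}_\infty(\mathrm{IM~1})>0$.
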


Under common conformity, recursion relocates the limiting author cloud from the exogenous $q^0$ to the endogenous $q^\ast$ without changing its pairwise geometry. Hence any IM~1--IM~2 gap in limiting JS diversity arises only from JS's position dependence. Personalization instead expands pairwise geometry relative to IMs~1--2 by $1/(1-\rho\lambda)$; heterogeneous $\lambda_i$ can additionally create a genuine IM~1--IM~2 geometric gap (Appendix~\ref{app:experiments}).

\xhdr{Summary} These results treat the conformity parameters $\lambda_i$ as exogenous: they characterize what happens when authors have a fixed tendency to adopt the model-induced norm, but not why authors would choose to have that tendency. 

\section{Strategic Conformity and the Price of Monoculture}\label{sec:strategic}
In \Cref{sec:convergence}, the conformity parameter $\lambda_i$ quantifies \emph{how much} author $i$ is influenced by the model-induced linguistic norm, but not \emph{why}. We now endogenize this parameter in the setting of \Cref{prop:fixed_conformity_mixture}: each author chooses $\lambda_i$ to trade off individual rewards of conforming---such as legibility, perceived polish and alignment with reviewer expectations---against the loss of distinctiveness that conformity entails. This turns the dynamics into a strategic game, and we ask the linguistic analogue of the question raised by algorithmic monoculture~\citep{kleinberg2021algorithmic}: \emph{can individually rational linguistic adaptation produce collectively inefficient homogenization?}
Under the fixed-model quadratic game with orthogonal signatures analyzed below, the answer is yes: the unique Nash equilibrium conformity level weakly exceeds the socially optimal level for every author, and strictly exceeds it for any author who conforms when others value distinctiveness. This gap is the value of an author's distinctiveness to others: a payoff externality no individual internalizes and one that harms even authors who optimally choose not to conform (\Cref{lem:externality}). In a symmetric regime, the resulting price of monoculture can be arbitrarily large across instances.

\subsection{Conformity as a choice} \label{subsec:conformity-choice}
We work in the setting of \Cref{problem1}, \Cref{prop:fixed_conformity_mixture}: the model distribution is fixed at $q^0$, each author $i$ has a preferred distribution $r_i \in \Delta^{m-1}$ and the adaptation operator is
$\Acal_i(q^0, r_i) = (1-\lambda_i)\, r_i + \lambda_i\, q^0$. 
For any conformity profile $\lambda = (\lambda_1,\dots,\lambda_n) \in [0,1]^n$, \Cref{prop:fixed_conformity_mixture} shows that the author dynamics converge exponentially to
$a_i(\lambda_i) = (1-\lambda_i)\, r_i + \lambda_i\, q^0$.
We use this convergence result to define a strategic game over long-run writing policies. Each author first chooses a fixed conformity level $\lambda_i$, which determines how strongly they rely on the model-induced norm. The interaction dynamics in \cref{eq:author_update} then unfold and payoffs are evaluated at the limiting profile $(a_1(\lambda_1),\dots,a_n(\lambda_n))$. Thus the game is played over long-run writing styles and not over individual time-step updates.
This timescale separation is justified by exponential convergence: when the evaluation horizon is long relative to the convergence time, authors' distributions spend most of the horizon close to their limiting profile, so payoffs evaluated at the limiting profile approximate long-run average payoffs.

For brevity, we write $u_i := r_i - q^0$ for author $i$'s \emph{signature vector}---the direction and magnitude of their idiosyncrasy relative to the model's linguistic norm---and $\sigma_i := 1-\lambda_i \in [0,1]$ for their \emph{retained distinctiveness}, so that
\[
a_i(\lambda_i) - q^0 = \sigma_i\, u_i, \quad
a_i(\lambda_i) - r_i = -\lambda_i\, u_i,
\]
\[
a_i(\lambda_i) - a_j(\lambda_j) = \sigma_i u_i - \sigma_j u_j .
\]

\xhdr{Payoffs} Given a conformity profile $\lambda$, author $i$'s utility is 
\begin{align}
	\label{eq:utility}
	U_i(\lambda_i, \lambda_{-i})
	\;=&\;
	\underbrace{-\,\frac{b_i}{2}\,\big\| a_i(\lambda_i) - q^0 \big\|_2^2}_{\text{legibility reward}}
	\;\underbrace{-\,\frac{c_i}{2}\,\big\| a_i(\lambda_i) - r_i \big\|_2^2}_{\text{authenticity cost}}\notag\\
	\;&+\;
	\underbrace{\frac{\theta_i}{2(n-1)} \sum_{j \neq i} \big\| a_i(\lambda_i) - a_j(\lambda_j) \big\|_2^2}_{\text{distinctiveness payoff}} ,
\end{align}
where $b_i, \theta_i \ge 0$ and $c_i > 0$. The three terms represent, respectively, the benefit of proximity to the model-induced norm $q^0$, the private cost of deviating from the author's preferred style $r_i$ and the value of being distinguishable from other authors. The authenticity cost makes full conformity costly, while the distinctiveness payoff depends on the realized styles of others and couples with the author's choices.

Payoffs in \cref{eq:utility} use squared Euclidean distance rather than the Jensen--Shannon divergence used for $D^t$; JS is locally quadratic (\Cref{rem:quadratic}). We therefore evaluate long-run diversity using the quadratic measure introduced in \Cref{sec:convergence},
$\widehat{D}_\infty(\lambda):=\widehat{D}(a_1(\lambda_1),\ldots,a_n(\lambda_n))$,
which matches the payoff geometry and yields closed-form equilibria. We further make the following structural assumption.

\begin{definition}
	\label{def:orthogonal}
	A population has \emph{orthogonal signatures} if $\langle u_i, u_j \rangle = 0$ for all $i \neq j$, with $d_i^2 := \|u_i\|_2^2 > 0$~for~all~$i$.
\end{definition}

Orthogonality is a tractable benchmark yielding closed-form welfare results. Signatures lie in the $m-1$-dimensional tangent space of the simplex, so \Cref{def:orthogonal} requires $n \leq m-1$: the benchmark describes populations no larger than the feature dimension. This is not restrictive when $F$ is a rich inventory of lexical and syntactic markers, but it does couple population size to feature-set size. Appendix~\ref{app:correlated-signatures} relaxes orthogonality.

Under orthogonality,
$\| a_i - a_j \|_2^2 = \sigma_i^2 d_i^2 + \sigma_j^2 d_j^2$,
and substituting in \cref{eq:utility} gives the separable form
\begin{equation}
	\label{eq:utility-sep}
	U_i(\lambda)
	=
	\frac{d_i^2}{2}\Big[ (\theta_i\!-\!b_i)\, \sigma_i^2 \!-\! c_i\, (1-\sigma_i)^2 \Big]
	+\frac{\theta_i}{2(n-1)} \sum_{j \neq i} \sigma_j^2\, d_j^2 ,
\end{equation}
and the long-run diversity simplifies to
\begin{equation}
	\label{eq:diversity-sep}
	\widehat{D}_\infty(\lambda) = \frac{1}{n} \sum_{i=1}^{n} \sigma_i^2\, d_i^2 .
\end{equation}

The coupling in \cref{eq:utility-sep} is purely through the second term: author $i$'s choice does not change their own best response, but it does change everyone else's \emph{payoff}.

\begin{restatable}[]{lemma}{lemmaexternality} \label{lem:externality}
	Under orthogonal signatures, for every $i \neq j$,
	\[
	\frac{\partial U_j}{\partial \lambda_i}
	\;=\;
	-\,\frac{\theta_j}{n-1}\, \sigma_i\, d_i^2
	\;\le\; 0,
	\]
	with strict inequality whenever $\theta_j > 0$ and $\lambda_i < 1$. In particular, an increase in any author's conformity strictly reduces the payoff of every other author who places positive value on distinctiveness, including authors who choose not to conform.
\end{restatable}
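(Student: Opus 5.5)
The plan is to differentiate the separable form \cref{eq:utility-sep} of author $j$'s utility directly with respect to $\lambda_i$ for a fixed $i\neq j$. Under orthogonal signatures, \cref{eq:utility-sep} writes $U_j$ as a first term depending only on $\sigma_j$ (hence only on $\lambda_j$), plus the coupling term $\frac{\theta_j}{2(n-1)}\sum_{k\neq j}\sigma_k^2 d_k^2$. Since $i\neq j$, the first term does not depend on $\lambda_i$. In the coupling sum, the only summand involving $\lambda_i$ is $\sigma_i^2 d_i^2$, where $d_i^2=\|r_i-q^0\|_2^2$ is a constant independent of the profile.

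Next I apply the chain rule with $\sigma_i=1-\lambda_i$, so $\partial\sigma_i/\partial\lambda_i=-1$. This gives
\[
\frac{\partial U_j}{\partial\lambda_i}=\frac{\theta_j}{2(n-1)}\cdot 2\sigma_i d_i^2\cdot(-1)=-\frac{\theta_j}{n-1}\sigma_i d_i^2 .
\]
For the sign, note $\theta_j\ge0$, $\sigma_i\in[0,1]$ and $d_i^2>0$ by \Cref{def:orthogonal}, so the derivative is $\le 0$. It is strictly negative exactly when $\theta_j>0$ and $\sigma_i>0$, that is, when $\lambda_i<1$.

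The only point needing care is that \cref{eq:utility-sep} is valid for every profile. I would justify it by expanding $\|\sigma_j u_j-\sigma_k u_k\|_2^2=\sigma_j^2d_j^2+\sigma_k^2d_k^2-2\sigma_j\sigma_k\langle u_j,u_k\rangle$ and using $\langle u_j,u_k\rangle=0$. This shows the cross term vanishes identically, so the derivative is legitimate on all of $[0,1]^n$, with one-sided derivatives at the endpoints.

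For the "in particular" clause, I would observe that the derivative formula holds regardless of $\lambda_j$, including $\lambda_j=0$. The externality therefore hits non-conforming authors as well. Because the derivative is strictly negative on $\lambda_i\in[0,1)$, integrating it shows that any increase of $\lambda_i$ within $[0,1]$ strictly lowers $U_j$ whenever $\theta_j>0$. I expect no real obstacle beyond keeping track of the fact that $\lambda_i$ enters $U_j$ only through the coupling term.
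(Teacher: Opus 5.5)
Your proposal is correct and follows the paper's proof: both differentiate the separable form \eqref{eq:utility-sep}, observe that $\lambda_i$ enters $U_j$ only through $\frac{\theta_j}{2(n-1)}\sigma_i^2 d_i^2$, and apply the chain rule with $\sigma_i = 1-\lambda_i$. Your extra steps (re-deriving the orthogonal cross-term cancellation, and integrating the derivative for the ``in particular'' clause) are sound additions the paper leaves implicit.
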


Pairwise contrast is a shared resource: the distance $\|a_i - a_j\|^2$ enters both $U_i$ and $U_j$, so when author $i$ moves toward the norm they consume contrast that author $j$ was also drawing on. \Cref{lem:externality} also resolves the strategic question for non-adapters: an author for whom resisting is optimal still bears the cost of others' adaptation, because the pool of styles against which their distinctiveness is measured collapses toward $q^0$. There is no insulation in holding out.

\xhdr{Equilibrium  exceeds optimal conformity} Next we compare the conformity levels chosen by self-interested authors at Nash equilibrium with those chosen by a social planner that maximizes total welfare. The next theorem shows that authors over-conform relative to the social optimum, which leads to weakly lower long-run diversity and weakly lower welfare at equilibrium.

\begin{restatable}[]{theorem}{thmwedge} \label{thm:wedge}
	Consider the game with payoffs in \Cref{eq:utility} under orthogonal signatures, with $c_i > 0$ for all $i$ and let $\bar{\theta}_{-i} := \frac{1}{n-1}\sum_{j \neq i} \theta_j$ denote the average distinctiveness value of the other authors. Then:
	\squishlist
	\item[(i)] Every author has a strictly dominant strategy, so the game has a unique Nash equilibrium $\lambda^{\mathrm{NE}}$, given by
	\[
	\lambda_i^{\mathrm{NE}} \;=\; \frac{(b_i - \theta_i)_+}{\,(b_i - \theta_i)_+ + c_i\,},
	\quad \text{where } (x)_+ := \max\{0, x\}.
	\]
	\item[(ii)] Utilitarian welfare $W(\lambda) := \sum_{i=1}^n U_i(\lambda)$ is maximized at the unique profile $\lambda^{\mathrm{SO}}$ given by
	\[
	\lambda_i^{\mathrm{SO}} \;=\; \frac{\big(b_i - \theta_i - \bar{\theta}_{-i}\big)_+}{\,\big(b_i - \theta_i - \bar{\theta}_{-i}\big)_+ + c_i\,}.
	\]
	\item[(iii)] For every author, $\lambda_i^{\mathrm{NE}} \ge \lambda_i^{\mathrm{SO}}$, with strict inequality if and only if $\lambda_i^{\mathrm{NE}} > 0$ and $\bar{\theta}_{-i} > 0$. That is, every author who conforms at all over-conforms relative to the social optimum, and consequently $\widehat{D}_\infty(\lambda^{\mathrm{NE}}) \le \widehat{D}_\infty(\lambda^{\mathrm{SO}})$ and $W(\lambda^{\mathrm{NE}}) \le W(\lambda^{\mathrm{SO}})$.
	\squishend
\end{restatable}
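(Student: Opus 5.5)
Everything will be done in the retained-distinctiveness variable $\sigma_i=1-\lambda_i\in[0,1]$, using the separable form \cref{eq:utility-sep}. Since $\lambda_i\mapsto\sigma_i$ is a decreasing bijection of $[0,1]$, maximizing over $\lambda_i$ is the same as maximizing over $\sigma_i$.

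\emph{Part (i).} In \cref{eq:utility-sep}, $U_i$ is the sum of $f_i(\sigma_i):=\frac{d_i^2}{2}[(\theta_i-b_i)\sigma_i^2-c_i(1-\sigma_i)^2]$ and a term that does not depend on $\sigma_i$. Hence any maximizer of $f_i$ on $[0,1]$ is a best response to every $\lambda_{-i}$. If that maximizer is unique, it is a strictly dominant strategy.

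Write $k_i:=b_i-\theta_i$. Then $f_i''=-d_i^2(k_i+c_i)$.

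\squishlist
\item Suppose $k_i+c_i>0$. Then $f_i$ is strictly concave. The first-order condition $-k_i\sigma_i+c_i(1-\sigma_i)=0$ gives $\sigma_i=c_i/(k_i+c_i)$.
\squishlist
\item If $k_i\ge 0$, this root lies in $(0,1]$, and $\lambda_i=k_i/(k_i+c_i)$, as claimed.
\item If $k_i<0$, the root exceeds $1$, so the constrained maximizer is $\sigma_i=1$, i.e.\ $\lambda_i=0=(k_i)_+/((k_i)_++c_i)$.
\squishend
\item Suppose $k_i+c_i\le 0$. Then $k_i<0$, so $f_i'(\sigma)=d_i^2[-k_i\sigma+c_i(1-\sigma)]>0$ on $[0,1]$. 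The function is strictly increasing, so the unique maximizer is again $\sigma_i=1$.
\squishend

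Uniqueness of each maximizer, together with strict dominance, gives a unique Nash equilibrium.

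\emph{Part (ii).} Summing \cref{eq:utility-sep} over $i$ and collecting the terms in $\sigma_i^2 d_i^2$: author $i$'s own utility contributes $f_i(\sigma_i)$. The distinctiveness terms of the others contribute $\frac{1}{n-1}\sum_{j\ne i}\theta_j\,\frac{\sigma_i^2 d_i^2}{2}=\frac{\bar\theta_{-i}}{2}\sigma_i^2 d_i^2$. Hence
\[
W=\sum_i g_i(\sigma_i),\qquad g_i(\sigma):=\frac{d_i^2}{2}\Big[(\theta_i+\bar\theta_{-i}-b_i)\sigma^2-c_i(1-\sigma)^2\Big].
\]
Welfare is therefore separable across authors. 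Each $g_i$ has exactly the form of $f_i$, with $k_i$ replaced by $k_i':=b_i-\theta_i-\bar\theta_{-i}$. Repeating the case analysis of (i) yields the stated unique $\lambda^{\mathrm{SO}}$.

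\emph{Part (iii).} The map $h(x)=x_+/(x_++c_i)$ is nondecreasing. It is constant ($=0$) for $x\le 0$ and strictly increasing for $x\ge 0$. Since $k_i'=k_i-\bar\theta_{-i}\le k_i$, we get $\lambda_i^{\mathrm{NE}}=h(k_i)\ge h(k_i')=\lambda_i^{\mathrm{SO}}$.

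For the strictness claim:
\squishlist
\item If $\lambda_i^{\mathrm{NE}}>0$ and $\bar\theta_{-i}>0$, then $k_i>0$ and $k_i'<k_i$. Strict monotonicity of $h$ on $[k_i'\vee 0,\,k_i]$ handles both $k_i'>0$ and $k_i'\le 0$, giving a strict inequality.
\item Conversely, if $\lambda_i^{\mathrm{NE}}=0$, then $\lambda_i^{\mathrm{SO}}=0$ as well, so equality holds.
\item If $\bar\theta_{-i}=0$, then $k_i'=k_i$, so again equality holds.
\squishend

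The consequences follow directly. By \cref{eq:diversity-sep}, $\widehat D_\infty$ is increasing in each $\sigma_i$, and we have shown $\sigma_i^{\mathrm{NE}}\le\sigma_i^{\mathrm{SO}}$ for every $i$. The welfare inequality is immediate because $\lambda^{\mathrm{SO}}$ maximizes $W$.

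\emph{Main obstacle.} The delicate step is the boundary and convexity bookkeeping. When $b_i-\theta_i+c_i\le 0$ (or $k_i'+c_i\le 0$), the objective is convex rather than concave, so the first-order condition no longer identifies the maximizer. I will handle this with the monotonicity argument above: a derivative sign check on $[0,1]$, rather than relying on the first-order condition. The same care is needed to confirm that the clipped formula covers every case uniformly.
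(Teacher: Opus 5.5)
Your proposal is correct and matches the paper's proof essentially step for step: separability of \cref{eq:utility-sep} yields strictly dominant strategies, welfare is again separable with $\theta_i$ replaced by $\theta_i+\bar{\theta}_{-i}$, and (iii) follows from monotonicity of $x\mapsto x_+/(x_++c_i)$ together with \cref{eq:diversity-sep}. The only cosmetic difference is that you split cases on the sign of $b_i-\theta_i+c_i$ before that of $b_i-\theta_i$, whereas the paper splits directly on $b_i\le\theta_i$ versus $b_i>\theta_i$; its derivative sign check in the first case covers your convex and concave subcases at once.
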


Author $i$'s privately optimal conformity treats their distinctiveness as worth $\theta_i$, while the planner values it at $\theta_i + \bar{\theta}_{-i}$, because every other author also derives contrast from author $i$'s signature. Each author's over-conformity increases with how much the rest of the community values contrast with them. Because the wedge depends on the average $\bar{\theta}_{-i}$, it need not vanish in large populations. Notably, since the equilibrium is in dominant strategies, the inefficiency is driven purely by the payoff externality of \Cref{lem:externality}---exactly as in algorithmic monoculture~\citep{kleinberg2021algorithmic}, where individually optimal reliance on a shared algorithm reduces aggregate welfare without any agent erring.

\subsection{The price of monoculture} 
We quantify the diversity loss caused by strategic conformity via the price of monoculture (\PoM), which compares the long-run diversity of the socially optimal conformity profile with that of Nash equilibrium. We evaluate \PoM in a symmetric setting, where all authors have the same conformity reward, authenticity cost, value for distinctiveness and distance from the model norm. In this case, the \PoM has a closed-form expression and reveals three regimes: no inefficiency, pure deadweight conformity and interior over-conformity with an arbitrarily large diversity loss.\looseness=-1

\begin{definition}
	\label{def:pom}
	The \emph{price of monoculture} of an instance is the ratio of socially optimal to equilibrium long-run diversity,
	\[
	\PoM \;:=\; \frac{\widehat{D}_\infty(\lambda^{\mathrm{SO}})}{\widehat{D}_\infty(\lambda^{\mathrm{NE}})} \;=\; \frac{\sum_i (\sigma_i^{\mathrm{SO}})^2 d_i^2}{\sum_i (\sigma_i^{\mathrm{NE}})^2 d_i^2} \;\ge\; 1 .
	\]
\end{definition}

\begin{restatable}[]{corollary}{corsymmetric} \label{cor:symmetric}
	Suppose $b_i = b$, $c_i = c$, $\theta_i = \theta$, and $d_i^2 = d^2$ for all $i$. Then:
	\squishlist
	\item[(i)] If $b \le \theta$: $\lambda^{\mathrm{NE}} = \lambda^{\mathrm{SO}} = 0$ and $\PoM = 1$. Distinctiveness is at least as valuable as conformity privately, so no inefficiency arises.
	\item[(ii)] If $\theta < b \le 2\theta$: $\lambda^{\mathrm{NE}} = \frac{b-\theta}{b+c-\theta} > 0$ while $\lambda^{\mathrm{SO}} = 0$, and
	\[
	\PoM = \left( \frac{b + c - \theta}{c} \right)^{\!2} .
	\]
	Every author rationally conforms, yet the planner would have \emph{no one} conform: equilibrium conformity is pure deadweight.
	\item[(iii)] If $b > 2\theta$: both levels are interior, $\lambda^{\mathrm{NE}} = \frac{b-\theta}{b+c-\theta} > \lambda^{\mathrm{SO}} = \frac{b-2\theta}{b+c-2\theta}$, and
	\[
	\PoM = \left( \frac{b + c - \theta}{\,b + c - 2\theta\,} \right)^{\!2}
	= \left( 1 + \frac{\theta}{\,b + c - 2\theta\,} \right)^{\!2} ,
	\]
	which is increasing in $\theta$, equals $1$ at $\theta = 0$, and can diverge along sequences with $b+c-2\theta\downarrow0$. The per-author welfare loss at equilibrium is
	\[
	\frac{1}{n}\Big( W(\lambda^{\mathrm{SO}}) - W(\lambda^{\mathrm{NE}}) \Big)
	=
	\frac{d^2}{2} \cdot \frac{c^2 \theta^2}{(b + c - 2\theta)(b + c - \theta)^2},
	\]
	which is strictly positive whenever $\theta > 0$.
	\squishend
\end{restatable}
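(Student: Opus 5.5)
The plan is to specialize \Cref{thm:wedge} to the symmetric instance and then compute everything in closed form using the separable expressions \cref{eq:utility-sep} and \cref{eq:diversity-sep}. With $b_i=b$, $c_i=c$, $\theta_i=\theta$, we get $\bar\theta_{-i}=\theta$. Part (i) of the theorem then gives $\lambda^{\mathrm{NE}}=\frac{(b-\theta)_+}{(b-\theta)_++c}$, and part (ii) gives $\lambda^{\mathrm{SO}}=\frac{(b-2\theta)_+}{(b-2\theta)_++c}$, common to all authors. The three regimes are exactly the sign patterns of $b-\theta$ and $b-2\theta$:
\begin{itemize}
\item in regime (i) both positive parts vanish;
\item in regime (ii) only $(b-\theta)_+$ is nonzero;
\item in regime (iii) both are nonzero.
\end{itemize}
In each case I will read off $\sigma^{\mathrm{NE}}=1-\lambda^{\mathrm{NE}}$ and $\sigma^{\mathrm{SO}}=1-\lambda^{\mathrm{SO}}$, namely $\sigma^{\mathrm{NE}}=\frac{c}{b+c-\theta}$ when $b>\theta$ and $\sigma^{\mathrm{SO}}=\frac{c}{b+c-2\theta}$ when $b>2\theta$, and $1$ otherwise. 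Strict inequality $\lambda^{\mathrm{NE}}>\lambda^{\mathrm{SO}}$ in (iii) follows from \Cref{thm:wedge}(iii), or directly because $x\mapsto x/(x+c)$ is increasing.

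Next, since all $d_i^2=d^2$ and the $\sigma_i$ are common, the ratio in \Cref{def:pom} collapses to $(\sigma^{\mathrm{SO}}/\sigma^{\mathrm{NE}})^2$. This gives $1$ in (i), $\big(\frac{b+c-\theta}{c}\big)^2$ in (ii), and $\big(\frac{b+c-\theta}{b+c-2\theta}\big)^2=\big(1+\frac{\theta}{b+c-2\theta}\big)^2$ in (iii). Monotonicity in $\theta$ is immediate, since $\theta/(b+c-2\theta)$ has an increasing numerator and a decreasing positive denominator. The value at $\theta=0$ is $1$. For divergence, I will note that $b>2\theta$ forces $b+c-2\theta>c>0$, so the denominator can shrink to zero only along sequences where both $c\downarrow0$ and $b-2\theta\downarrow0$ with $\theta$ bounded below. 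I will state this explicitly so the claim is not misread as allowing $b+c-2\theta\le 0$.

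For the welfare loss, I will substitute the symmetric profile into \cref{eq:utility-sep}. Because $\frac{\theta}{2(n-1)}\sum_{j\ne i}\sigma^2 d^2=\frac{\theta d^2}{2}\sigma^2$, this yields
\[
\tfrac1n W=\tfrac{d^2}{2}f(\sigma),\qquad f(\sigma):=(2\theta-b)\sigma^2-c(1-\sigma)^2 .
\]
The function $f$ is a quadratic with leading coefficient $-(b+c-2\theta)<0$ in regime (iii), and its unconstrained maximizer is $\sigma^{\mathrm{SO}}=c/(b+c-2\theta)\in(0,1)$. The exact identity $f(\sigma^{\mathrm{SO}})-f(\sigma)=(b+c-2\theta)(\sigma-\sigma^{\mathrm{SO}})^2$ then reduces the loss to computing
\[
\sigma^{\mathrm{SO}}-\sigma^{\mathrm{NE}}=\frac{c\theta}{(b+c-2\theta)(b+c-\theta)} .
\]
Squaring this and multiplying by $(b+c-2\theta)d^2/2$ gives the stated formula, which is positive exactly when $\theta>0$.

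The computations are routine. The main point needing care is the divergence statement in (iii), which requires checking that the regime constraint $b>2\theta$ forces $c\to0$ along any divergent sequence. A secondary point is verifying that the symmetric restriction of welfare is the right object: the planner's optimum from \Cref{thm:wedge}(ii) is itself symmetric, so comparing along the symmetric line is legitimate.
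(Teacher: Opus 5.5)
Your proposal is correct and follows the paper's proof. You specialize \Cref{thm:wedge} with $\bar\theta_{-i}=\theta$, read $\PoM=(\sigma^{\mathrm{SO}}/\sigma^{\mathrm{NE}})^2$ off \eqref{eq:diversity-sep}, and compute the welfare gap from the per-author quadratic $f(\sigma)=(2\theta-b)\sigma^2-c(1-\sigma)^2$. Your vertex-form identity $f(\sigma^{\mathrm{SO}})-f(\sigma)=(b+c-2\theta)(\sigma-\sigma^{\mathrm{SO}})^2$ is a tidier version of the paper's evaluate-and-subtract step. You also check the monotonicity and divergence claims, which the paper's proof leaves implicit.

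One caveat concerns the strict inequality $\lambda^{\mathrm{NE}}>\lambda^{\mathrm{SO}}$ in (iii). Both justifications you give need $\theta>0$, since \Cref{thm:wedge}(iii) requires $\bar\theta_{-i}>0$. At $\theta=0<b$ the two levels coincide, consistent with $\PoM=1$ there, so strictness holds only for $\theta>0$.
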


Three observations are worth highlighting. First, the welfare loss vanishes at $\theta=0$ and is quadratic to leading order near zero; when distinctiveness has no value, conformity is simply beneficial standardization. Thus, $\theta$ distinguishes benign standardization from harmful monoculture. Second, in regime~($ii$), conformity is individually rational but socially wasteful: every author conforms although the planner prefers none. Third, since $b>2\theta$ implies $b+c-2\theta>c$, every regime satisfies $\PoM\leq(1+\theta/c)^2$. The inefficiency is finite for each fixed instance but not uniformly bounded: it can diverge whenever $\theta/c\to\infty$, including as $c\downarrow0$ with $\theta$ fixed. Appendix~\ref{app:remarks} discusses externalities on readers (\Cref{rem:readers}) and strategic conformity under recursive model updates (\Cref{rem:endogenous-norm}).

\section{Quantitative Comparison and Illustrations} \label{sec:experiments}
We simulate $n=100$ authors over $m=10$ abstract linguistic features for $100$ independent runs of $T=200$ steps
(see Appendix~\ref{app:experiments} for details).
\Cref{fig:linguistic_diversity} reports the mean and standard deviation of population-level linguistic diversity across runs. Panel~(a) compares the three interaction mechanisms using
$\beta=0.25$ and, for IM~3, $\rho=2/3$ ($\gamma=0.5$ and $\delta=0.25$). Across all mechanisms, LLM assistance initially reduces diversity, after which $D^t$ stabilizes at a
mechanism-specific level. 
Under the baseline parameterization, recursive updates of the shared model (IM~2) produce a faster decline in $D^t$ and a lower limiting diversity than IM~1. By \Cref{prop:common_conformity}, under common conformity the mechanisms have identical limiting pairwise geometry, so any JS gap is purely positional. In our baseline, heterogeneous $\lambda_i$ additionally produces the quadratic gap isolated in Appendix~\ref{app:experiments}. 
Panel~(b) examines personalization in IM~3. Holding $\beta=0.25$ fixed, we vary $\rho$ and set $\gamma=(1-\beta)\rho$ and $\delta=(1-\beta)(1-\rho)$, thereby reallocating update
weight from population-level to author-specific feedback. Diversity at $T=200$ increases monotonically with $\rho$, from approximately
$0.057$ at $\rho=0$ to $0.171$ at $\rho=1$, showing that stronger
personalization can preserve population-level linguistic diversity. Additional details and experiments are reported in
Appendix~\ref{app:experiments}.

\begin{figure}
	\centering
	\includegraphics[width=0.75\linewidth]{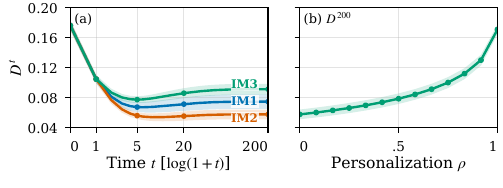}
	\caption{Population-level linguistic diversity under LLM assistance.
	Panel~(a) shows the evolution of $D^t$ under IM~1--3, with time displayed on a
	$\log(1+t)$ scale and tick labels reporting the original time steps.
	Panel~(b) shows IM~3 diversity at $T=200$ as personalization $\rho$ increases.
	Lines report means over $100$ runs and shading denotes $\pm1$ standard
	deviation; larger values indicate greater diversity.}
	\label{fig:linguistic_diversity}
\end{figure}

\section{Conclusions, Limitations and Future Work} \label{sec:conclusion}
We formalized linguistic monoculture as a population-level dynamical process over author and model distributions. Under our mechanisms, shared models can pull authors toward a common norm, while author-specific preferences and personalization can preserve diversity. Within our utility model, equilibrium conformity can exceed the social optimum, yielding a potentially unbounded price of monoculture in some regimes. When distinctiveness has little value, convergence can instead be efficient.

\xhdr{Limitations and future work} \label{app:limitations}
Our work introduces a mathematical framework for formalizing linguistic monoculture in LLM-assisted language use. The framework is stylized by design: it focuses on a limited set of interaction mechanisms that make it possible to analyze how linguistic diversity evolves under different forms of author--model interaction. As with many theoretical models, this tractability relies on simplifying assumptions, which allow us to obtain explicit convergence, equilibrium and welfare guarantees. These assumptions provide a foundation for identifying and formally analyzing mechanisms of linguistic convergence, while leaving richer behavioral and empirical refinements for future work. We therefore discuss limitations of the framework and outline directions for connecting it more closely to empirical settings and broadening its applicability.

\smallskip
\noindent
\textit{Heterogeneous non-LLM language exposure.}
Our framework isolates LLM-mediated adaptation from other influences, such as collaborators, reading research articles, reviewers, disciplinary norms, broader media environments and other human factors that shape language use. Thus, our results characterize convergence under controlled settings rather than a complete model of linguistic change under real-world conditions. Future work should incorporate heterogeneous non-LLM exposures, social networks, recursive norm-shaping incentives, and empirical calibration of conformity and personalization parameters from longitudinal corpora of LLM-assisted writing. 

\smallskip
\noindent
\textit{Fixed conformity, quadratic payoff and orthogonal signatures}
In \Cref{sec:strategic}, we make three simplifying assumptions to obtain closed-form expressions for equilibrium, welfare, and the price of monoculture. First, each author chooses a fixed conformity level $\lambda_i$, which determines their writing policy and remains unchanged over time. In practice, authors may update their conformity levels in response to experience, feedback, or changes in model behavior. Second, the analysis uses quadratic payoffs. Finally, the main-text closed-form results assume orthogonal author signatures, permitting at most $m-1$ nonzero signatures in $\Delta^{m-1}$. These assumptions isolate the conformity externality, but real signatures may be correlated and distinctiveness need not be valued quadratically. Extending the framework to dynamic conformity choices, richer similarity structures, and alternative payoff models is an important direction for future work.

\smallskip
\noindent
\textit{Empirical validation}
Our theoretical analysis is supported by simulations intended to illustrate the qualitative behavior predicted by our framework, rather than to provide empirically calibrated forecasts. The parameter choices allow us to compare interaction mechanisms under controlled conditions and should not be interpreted as estimates of linguistic convergence.
A natural next step is to verify these findings in controlled longitudinal studies of LLM-assisted writing, measuring how authors' linguistic-style distributions evolve under fixed, recursively updated and personalized assistance. In this sense, our framework provides a foundation for future empirical work by identifying the interaction mechanisms, parameters, and diversity measures that such studies could estimate or experimentally vary.

\medskip
\xhdr{Acknowledgments and Funding}
Suhas Thejaswi acknowledges support from the Technology Industries of Finland Centennial Foundation through a grant awarded to Aalto University. The authors declare that the funding source does not create any conflict of interest in relation to this work.


\medskip
\xhdr{GenAI usage statement} LLMs were used to edit and polish author-written text, and to assist with implementation and code refinement.

\bibliographystyle{plainnat}
\bibliography{refs}

\cleardoublepage
\appendix

\section{Remarks and Further Clarifications} 
\label{app:remarks}
This section provides additional clarifications and remarks that are omitted from the main text due to space constraints.

\begin{remark}[Distributional representation of linguistic style]
	\label{rem:distributional}
	We represent linguistic style as a distribution rather than a single point in feature space because neither authors nor models exhibit one fixed style across contexts. Human language varies systematically with task, genre, and audience---the linguistic notion of register~\citep{pescuma2023situating,biber2019register}---and the style a model induces depends on the prompt, task, and framing~\citep{brucks2025prompt}. A distribution captures this context-dependent variation while still giving a well-defined way to measure diversity across authors. The representation is also general: to study convergence in a specific feature (\eg, syntactic or formatting conventions) or register (\eg, academic prose), the feature set $\mathcal{F}$ can be restricted accordingly while other aspects of language are treated as fixed.
\end{remark}

\begin{remark}[Reparameterizing the recursive model update]
	\label{rem:reparam}
	The model update in \Cref{problem2} can equivalently be written as
	\[
	q^{t+1} = \beta q^t + \sum_{j=1}^n \gamma_j p_j^{t+1},\quad
	\text{where}\,\, \beta \geq 0,\ \gamma_j \geq 0,\ \beta+\sum_{j=1}^n \gamma_j = 1.
	\]
	Here, $\gamma_j$ is the contribution of author $j$'s output to the model update, and setting $\gamma_j=(1-\beta)w_j$ recovers the weighted-average aggregation. This form keeps the update a convex combination of the old model distribution and the new population data.
\end{remark}

\begin{remark}[Quadratic surrogate for JS]
	\label{rem:quadratic}
	Payoffs in \eqref{eq:utility} use squared Euclidean distance rather than the Jensen--Shannon divergence used for $D^t$. The two agree to second order: for nearby distributions,
	\[
	\mathrm{JS}(p,q)=\frac{1}{8}\sum_k\frac{(p_k-q_k)^2}{x_k}
	+o(\|p-q\|^2),\qquad x=\frac{p+q}{2}.
	\]
	Thus, when feature probabilities are bounded below, the two metrics are equivalent up to constants. Unlike squared Euclidean distance, however, JS also depends on the midpoint $x$: clouds with identical pairwise difference vectors can receive different JS diversity when located in different regions of the simplex. All results in Section~\ref{sec:strategic} concern $\widehat{D}_\infty$ and the quadratic payoffs in \eqref{eq:utility}; Appendix~\ref{app:experiments} additionally uses a translation-invariant quadratic diagnostic to separate changes in pairwise geometry from this positional sensitivity.
\end{remark}

\begin{remark}[Reader welfare]
	\label{rem:readers}
	\Cref{thm:wedge} counts only authors' payoffs in welfare, so each pairwise contrast is valued by exactly its two endpoints and the wedge $\bar{\theta}_{-i}$ is independent of $n$. If population diversity also benefits agents outside the game---readers, students, downstream researchers who never choose a $\lambda_i$ but consume the variety the community produces---then welfare contains an additional term $\Theta \cdot \widehat{D}_\infty(\lambda)$ that no author internalizes even partially. Under orthogonal signatures, this adds $\frac{\Theta}{n}\sigma_i^2 d_i^2$ to the social value of author $i$'s retained distinctiveness. Thus the planner's effective distinctiveness value for author $i$ increases by $2\Theta/n$ in the first-order condition, and if the societal value $\Theta$ scales with the size of the audience, the wedge grows with the community's reach. The inefficiency identified in the main text is therefore a lower bound: it is the loss that remains even when only the strategic participants are counted.
\end{remark}

\begin{remark}[Strategic conformity under recursive updates]
	\label{rem:endogenous-norm}
	We have endogenized $\lambda$ in the fixed-model setting of \Cref{problem1}. In the recursive setting of \Cref{problem2}, the equilibrium norm itself depends on the conformity profile,
	\[
	q^\ast(\lambda) = \frac{\sum_i w_i (1-\lambda_i) r_i}{1 - \sum_i w_i \lambda_i},
	\]
	by \Cref{prop:shared_recursive_conformity}. An author's conformity then exerts a second externality: by feeding less of their signature back into training data, they shift the shared norm itself toward the signatures of the remaining non-conformists, reweighting whose style defines ``polished'' for everyone. For $\lambda_i<1$, the fixed point remains well-defined and can be viewed as a weighted average of the preferred distributions $r_i$, with weights proportional to $w_i(1-\lambda_i)$. The expression becomes singular only at the boundary where all $\lambda_i=1$; as the profile approaches that boundary, the limiting norm can depend on the relative rates at which the terms $1-\lambda_i$ vanish. This boundary behavior is the strategic counterpart of the singularity noted after \Cref{prop:shared_recursive_conformity} and of model-collapse dynamics~\citep{shumailov2024ai}. Characterizing equilibria of the two-externality game, and whether norm-shaping incentives dampen or amplify over-conformity, is an open direction we view as the natural next step.
\end{remark}

\section{Omitted Proofs from Section~\ref{sec:convergence}}
\label{app:proofs-convergence}

\proplinguisticconvergence*
\begin{proof}
	Let $a:=\Acal(q^0,z)$. By assumption, the update rule becomes
	\[
	p_i^{t+1} = (1-\alpha_i)\,p_i^t + \alpha_i\,a.
	\]
	By subtracting $a$ on both sides followed by induction on $t$, we get,
	\begin{align*}
		p_i^{t+1} - a &= (1-\alpha_i)(\,p_i^t - a)\\
		p_i^{t} - a &= (1-\alpha_i)^t\,(p_i^0 - a)
	\end{align*}
	Taking $\ell_1$-norms and relying on the fact that both $p_i^{0}$ and $a$ are probability distributions, so their $\ell_1$ distance is at most $2$, we have
	\begin{align*}
		\|p_i^{t} - a\|_1 &= (1-\alpha_i)^t\,\|\left(p_i^0 - a\right)\|_1
		\leq 2\,(1-\alpha_i)^t.
	\end{align*}
	Using the standard inequality $1-x \leq e^{-x}$ for $x \in [0,1]$, $(1-\alpha_i)^t \leq e^{- \alpha_i t}$, we get
	\[
	\|p_i^{t} - a\|_1 \leq 2 e^{- \alpha_i t}.
	\]
	Now consider any pair of authors $i,j$. By triangle inequality and using the bound above, it follows that,
	\begin{align*}
		\|p_i^t-p_j^t\|_1 &\leq \|p_i^t- a \|_1+\|p_j^t - a\|_1 \\
		&\leq 2e^{-\alpha_{i}t} + 2e^{-\alpha_{j}t}
		\leq 4e^{-\alpha_{\min} \, t}.
	\end{align*}
	Now, using the standard bound $JS(p_i^t,p_j^t) \leq\frac{1}{2}\|p_i^t-p_j^t\|_1$, we obtain
	$JS(p_i^t,p_j^t) \leq 2e^{- \alpha_{\min} \, t}$.
	And, averaging over all ordered pairs $i \neq j$, we get
	\[
	D^t = \frac{1}{n(n-1)} \sum_{i\neq j}JS(p_i^t,p_j^t) \leq 2e^{-\alpha_{\min}t}.
	\]
	Therefore, to ensure $D^t \leq \varepsilon$, it is sufficient that
	\begin{align*}
		2e^{-\alpha_{\min}\, t} \leq \varepsilon \implies
		t \geq \frac{\log(2/ \varepsilon)}{\alpha_{\min}}
		= O \left(\frac{\log(1/\varepsilon)}{\alpha_{\min}}\right),
	\end{align*}
	which completes the proof.
\end{proof}

\propauthorspecificfixedadaptation*
\begin{proof}[Proof of \Cref{prop:author_specific_fixed_adaptation}]
	For each author $i$, define $a_i:=\Acal_i(q^0,z_i^t)$, which is fixed over time by assumption. The update rule becomes
	\[
	p_i^{t+1} = (1-\alpha_i)\,p_i^t + \alpha_i\, a_i.
	\]
	Subtracting \(a_i\) from both sides, followed by induction gives
	\begin{align*}
		p_i^{t+1}-a_i &= (1-\alpha_i)\,(p_i^t-a_i) \\
		p_i^t-a_i &= (1-\alpha_i)^t\,(p_i^0-a_i).
	\end{align*}
	Taking $\ell_1$-norms and using the fact that both $p_i^0$ and $a_i$ are probability distributions whose $\ell_1$ distance is at most $2$, gives
	\[
	\|p_i^t-a_i\|_1 \leq 2\,(1-\alpha_i)^t \leq 2\,e^{-\alpha_i t}
	\leq 2\,e^{-\alpha_{\min}t}.
	\]
	Thus $p_i^t\to a_i$ for every author $i$, as $t \to \infty$.
	Since Jensen--Shannon divergence is continuous on the probability simplex, for every pair $i,j$,
	\[
	JS(p_i^t,p_j^t) \to JS(a_i,a_j).
	\]
	Averaging over all ordered pairs $i\neq j$, we obtain
	\[
	D^t = \frac{1}{n(n-1)} \sum_{i\neq j}JS(p_i^t,p_j^t) \to \frac{1}{n(n-1)} \sum_{i\neq j}JS(a_i,a_j)
	=D^\infty.
	\]
	Therefore \(D^t\to 0\) if \(a_i=a_j\) for all \(i,j\). Conversely, since \(JS(a_i,a_j)=0\) if and only if \(a_i=a_j\), we have \(D^\infty=0\) only when all limiting adapted distributions are identical.
	
	\xhdr{Convergence of linguistic diversity}
	The time steps necessary for convergence of authors to their own limits within $\varepsilon >0$, \ie $\|p_i^t-a_i\|_1 \leq \varepsilon$, follows from the analysis in \Cref{prop:linguistic_convergence}, and it is enough to require $t \geq \frac{\log(2/\varepsilon)}{\alpha_{\min}}$. However, if $D^\infty>0$, then for any $\varepsilon < D^\infty$, there is no convergence time to $D^t\leq \varepsilon$. The process stabilizes at a positive level of linguistic diversity.
\end{proof}

\propfixedconformitymixture*
\begin{proof}[Proof of \Cref{prop:fixed_conformity_mixture}]
	Using
	$a_i(\lambda_i) := (1-\lambda_i)r_i+\lambda_i q^0$,
	the update rule of the author's distribution can be written as
	\begin{align*}
		p_i^{t+1} &= (1-\alpha_i)p_i^t+\alpha_i a_i(\lambda_i)\\
		p_i^{t+1}-a_i(\lambda_i) &= (1-\alpha_i)\left(p_i^t-a_i(\lambda_i)\right)\\
		p_i^t-a_i(\lambda_i) &= (1-\alpha_i)^t\left(p_i^0-a_i(\lambda_i)\right)\\
		\|p_i^t-a_i(\lambda_i)\|_1 &= (1-\alpha_i)^t\|p_i^0-a_i(\lambda_i)\|_1,
	\end{align*}
	where the second line subtracts $a_i(\lambda_i)$ on both sides, the third follows by induction, and the fourth takes $\ell_1$-norms.
	Since both $p_i^0$ and $a_i(\lambda_i)$ are probability distributions, their $\ell_1$-distance is at most $2$. Therefore, using the standard inequality $1-x\leq e^{-x}$ for $x\in[0,1]$ and $\alpha_{\min}:= \min_{i\in[n]} \alpha_i$,
	\begin{align*}
		\|p_i^t-a_i(\lambda_i)\|_1 &\leq 2(1-\alpha_i)^t
		\leq 2e^{-\alpha_i t}
		\leq 2e^{-\alpha_{\min}t},\\
		\max_{i\in[n]}\|p_i^t-a_i(\lambda_i)\|_1 &\leq 2e^{-\alpha_{\min}t}.
	\end{align*}
	To guarantee $\max_{i\in[n]}\|p_i^t-a_i(\lambda_i)\|_1\leq \varepsilon$, it is sufficient that
	$2e^{-\alpha_{\min}t}\leq \varepsilon$, equivalently
	$t\geq \frac{\log(2/\varepsilon)}{\alpha_{\min}}$.
	This proves the claimed convergence-time bound.
	
	Since \(p_i^t\to a_i(\lambda_i)\) for every author \(i\), and Jensen--Shannon divergence is continuous on the probability simplex,
	\[
	JS(p_i^t,p_j^t) \to JS(a_i(\lambda_i),a_j(\lambda_j)).
	\]
	Averaging over all ordered pairs \(i\neq j\) gives
	\[
	D^t \to D^\infty(\lambda) = \frac{1}{n(n-1)} \sum_{i\neq j} JS\left(a_i(\lambda_i),a_j(\lambda_j)\right).
	\]
	Because every Jensen--Shannon term is nonnegative and
	$JS(p,q)=0$ if and only if $p=q$, we have
	$D^\infty(\lambda)=0$ if and only if
	$a_i(\lambda_i)=a_j(\lambda_j)$ for all $i,j$.
	In particular, if $\lambda_i=1$ for all $i$, then
	$a_i(\lambda_i)=q^0$ for every $i$, and therefore
	$D^t\to 0$.
	
	Finally, suppose that $\lambda_i=\lambda$ for all authors.
	For any $0\leq\lambda_1<\lambda_2\leq 1$, define
	\[
	c:=\frac{1-\lambda_2}{1-\lambda_1}\in[0,1].
	\]
	Then, for every author $i$,
	\[
	a_i(\lambda_2)
	=
	c\,a_i(\lambda_1)+(1-c)q^0.
	\]
	By joint convexity of Jensen--Shannon divergence,
	\[
	JS\bigl(a_i(\lambda_2),a_j(\lambda_2)\bigr)
	\leq
	c\,JS\bigl(a_i(\lambda_1),a_j(\lambda_1)\bigr).
	\]
	Averaging over all ordered pairs shows that
	$D^\infty(\lambda)$ is nonincreasing in the common conformity
	level $\lambda$.
	
	Moreover,
	\[
	a_i(\lambda)-a_j(\lambda)
	=
	(1-\lambda)(r_i-r_j),
	\qquad
	\|a_i(\lambda)-a_j(\lambda)\|_1
	=
	(1-\lambda)\|r_i-r_j\|_1.
	\]
	Using $JS(p,q)\leq \frac12\|p-q\|_1$, we obtain
	\[
	D^\infty(\lambda)
	\leq
	(1-\lambda)
	\frac{1}{2n(n-1)}
	\sum_{i\neq j}\|r_i-r_j\|_1.
	\]
	Thus $D^\infty(\lambda)\to0$ as $\lambda\to1$, and
	$D^\infty(\lambda)=O(1-\lambda)$.
\end{proof}

\propsharedrecursiveconformity*
\begin{proof}[Proof of \Cref{prop:shared_recursive_conformity}]
	At equilibrium $p_i^{t+1} = p_i^t = p_i^\ast$ and $q^t=q^\ast$. Since $\alpha_i > 0$, the author distributions satisfy
	\begin{align} \label{eq:author_prob_equi}
		p_i^\ast &= (1-\alpha_i)p_i^\ast + \alpha_i\left((1-\lambda_i)r_i+\lambda_i q^\ast\right) \notag\\
		\alpha_i\,p_i^\ast &= \alpha_i\left((1-\lambda_i)r_i+\lambda_i q^\ast\right) \notag\\
		p_i^\ast &=(1-\lambda_i)r_i+\lambda_i q^\ast
	\end{align}
	Moreover, at equilibrium the model update satisfies
	\begin{align*}
		q^\ast &=\beta q^\ast+(1-\beta)P^\ast, \\
		q^\ast &= P^\ast = \sum_{i=1}^n w_i p_i^\ast \\
		q^\ast &= \sum_{i=1}^n w_i \bigl((1-\lambda_i)r_i+\lambda_i q^\ast\bigr)\\
		q^\ast &= \sum_{i=1}^n w_i(1-\lambda_i)r_i + \left(\sum_{i=1}^n w_i\lambda_i\right)q^\ast,
	\end{align*}
	where the third line substitutes \Cref{eq:author_prob_equi}.
	Since $\lambda_i<1$ for every $i$, and since $w_i\geq 0$ with $\sum_i w_i=1$, we have $\sum_{i=1}^{n}w_i \lambda_i<1$. Therefore,
	\begin{align} \label{eq:model_prob_equi}
		q^\ast &= \frac{\sum_{i=1}^n w_i(1-\lambda_i)\,r_i} {1-\sum_{i=1}^n w_i\lambda_i}.
	\end{align}
	Equations \eqref{eq:author_prob_equi} and \eqref{eq:model_prob_equi} give the claimed expressions for $p_i^\ast$ and $q^\ast$ at equilibrium.
	It remains to show that the distributions converge to this equilibrium.
	Define the deviations from equilibrium by $x_i^t:=p_i^t-p_i^\ast$ and $y^t:=q^t-q^\ast$. Using the update rule for $p_i^{t+1}$ and the equilibrium identity for $p_i^\ast$,
	\begin{align*}
		x_i^{t+1} &=p_i^{t+1}-p_i^\ast
		= (1-\alpha_i)(p_i^t-p_i^\ast) + \alpha_i\lambda_i(q^t-q^\ast)\\
		&= (1-\alpha_i)x_i^t+\alpha_i\lambda_i y^t, \\
		\|x_i^{t+1}\|_1 & \leq (1-\alpha_i)\,\|x_i^t\|_1 + \alpha_i\, \lambda_i \|y^t\|_1.
	\end{align*}
	Let $R^t:=\max \left\{\max_{i\in[n]}\|x_i^t\|_1,\, \|y^t\|_1 \right\}$ and $\mu:=\min_{i\in[n]} \alpha_i\,(1-\lambda_i)$.
	Since $\alpha_i>0$ and $\lambda_i<1$ for all $i$, we have $\mu>0$. Then, for every $i$,
	\[
	\|x_i^{t+1}\|_1 \leq (1-\alpha_i)R^t+\alpha_i\lambda_i R^t = \left(1-\alpha_i(1-\lambda_i)\right)R^t
	\leq (1-\mu)R^t.
	\]
	Next, consider the model deviation $y^{t+1}=q^{t+1}- q^\ast$. Using the update rule
	$q^{t+1}=\beta q^t+(1-\beta)P^{t+1}$
	and the equilibrium identity
	$q^\ast=\beta q^\ast+(1-\beta)P^\ast$,
	we get
	\[ y^{t+1} = \beta(q^t-q^\ast)+(1-\beta)(P^{t+1}-P^\ast).\]
	Using $P^{t+1}=\sum_{i=1}^n w_i p_i^{t+1}$ and $P^\ast=\sum_{i=1}^n w_i p_i^\ast$, we have
	$
	P^{t+1}-P^\ast = \sum_{i=1}^n w_i x_i^{t+1}.
	$
	Substituting back and using $\|x_i^{t+1}\|_1\leq(1-\mu)R^t$ together with $\sum_i w_i=1$,
	\begin{align*}
		y^{t+1} &= \beta y^t+(1-\beta)\sum_{i=1}^n w_i x_i^{t+1} \\
		\|y^{t+1}\|_1 &\leq \beta\|y^t\|_1 + (1-\beta)\sum_{i=1}^n w_i\|x_i^{t+1}\|_1 \\
		&\leq \bigl(\beta+(1-\beta)(1-\mu)\bigr)R^t
		= \bigl(1-(1-\beta)\mu\bigr)R^t.
	\end{align*}
	Let $\kappa:=1-(1-\beta)\mu$. Since $\beta<1$ and $\mu>0$, we have $\kappa<1$. Also $\kappa\geq 1-\mu$, because
	$\kappa=1-(1-\beta)\mu \geq 1-\mu$.
	Therefore, combining the bounds for $x_i^{t+1}$ and $y^{t+1}$, we get
	$R^{t+1}\leq \kappa R^t$,
	and applying this recursively yields $R^t\leq \kappa^t R^0$.
	Since $\kappa<1$, it follows that $R^t\to 0$ as $t\to\infty$. Hence,
	$p_i^t\to p_i^\ast$ for every $i$, and $q^t\to q^\ast$. This establishes the convergence.
	
	Since Jensen--Shannon divergence is continuous on the probability simplex,
	$JS(p_i^t,p_j^t)\to JS(p_i^\ast,p_j^\ast)$ for every pair $i,j$,
	and averaging over all ordered pairs $i\neq j$ yields
	\[
	D^t = \frac{1}{n(n-1)} \sum_{i\neq j}JS(p_i^t,p_j^t)
	\to \frac{1}{n(n-1)} \sum_{i\neq j}JS(p_i^\ast,p_j^\ast)
	= D^\ast.
	\]
	
	\xhdr{Convergence time}
	The convergence is exponential. Since all initial and equilibrium quantities are probability distributions, $R^0\le 2$, so
	\[
	\max\left\{\max_i\|p_i^t-p_i^\ast\|_1,\|q^t-q^\ast\|_1\right\}
	\leq
	2\kappa^t.
	\]
	Consequently, to guarantee
	$\max_i\|p_i^t-p_i^\ast\|_1\leq \varepsilon$ and $\|q^t-q^\ast\|_1\leq \varepsilon$,
	it is sufficient to take
	\[
	t \geq \frac{\log(2/\varepsilon)} {(1-\beta)\min_{i \in [n]} \alpha_i(1-\lambda_i)}.
	\]
	Thus the worst-case convergence guarantee for the recursive-update setting is weaker than for the fixed-model setting, where the corresponding bound is controlled only by \(\alpha_{\min}\); the bound need not be tight, and empirically the recursive dynamics can homogenize faster (\Cref{sec:experiments}).
	This completes the proof.
\end{proof}

\proppersonalizedrecursive*
\begin{proof}[Proof of \Cref{prop:personalized_recursive}]
	First, we characterize the equilibrium, and then show that the dynamics converge to it.
	
	\xhdr{Characterizing the equilibrium}
	At equilibrium, for every author $i$, we have
	$p_i^{t+1}=p_i^t=p_i^\ast$ and $q_i^{t+1}=q_i^t=q_i^\ast$.
	The author update gives
	\begin{align*}
		p_i^\ast &= (1-\alpha_i)p_i^\ast + \alpha_i\bigl((1-\lambda_i)r_i+\lambda_i q_i^\ast\bigr)\\
		p_i^\ast & =(1-\lambda_i)r_i+\lambda_i q_i^\ast && \text{since $\alpha_i>0$.}
	\end{align*}
	Next, the personalized model update gives, with $P^\ast:=\sum_{j=1}^n w_j p_j^\ast$,
	\begin{align*}
		q_i^\ast &= \beta q_i^\ast+\gamma p_i^\ast +\delta P^\ast \\
		(1-\beta)q_i^\ast &=\gamma p_i^\ast+\delta P^\ast \\
		q_i^\ast &=\frac{\gamma}{1-\beta} p_i^\ast+ \frac{\delta}{1-\beta} P^\ast.
	\end{align*}
	Using $\gamma+\delta=1-\beta$ and $\rho:=\frac{\gamma}{1-\beta}$, this becomes
	\[
	q_i^\ast =\rho p_i^\ast+(1-\rho)P^\ast.
	\]
	We now solve for \(p_i^\ast\). Substituting the expression for \(q_i^\ast\) into the equilibrium equation for \(p_i^\ast\),
	\begin{align*}
		p_i^\ast &= (1-\lambda_i)r_i+\lambda_i\bigl(\rho p_i^\ast+(1-\rho)P^\ast\bigr) \\
		(1-\rho\lambda_i)p_i^\ast &= (1-\lambda_i)r_i+\lambda_i(1-\rho)P^\ast \\
		p_i^\ast &= \frac{1-\lambda_i}{1-\rho\lambda_i}r_i +
		\frac{\lambda_i(1-\rho)}{1-\rho\lambda_i}P^\ast,
	\end{align*}
	where the division is valid since \(\lambda_i<1\) and \(\rho\in[0,1]\) imply \(1-\rho\lambda_i>0\).
	Defining
	$\eta_i:=\frac{1-\lambda_i}{1-\rho\lambda_i}$, we have
	$1-\eta_i = \frac{\lambda_i(1-\rho)}{1-\rho\lambda_i}$,
	and therefore
	\[
	p_i^\ast=\eta_i r_i+(1-\eta_i)P^\ast.
	\]
	It remains to identify \(P^\ast\). By definition,
	$P^\ast=\sum_{i=1}^n w_i p_i^\ast$. Substituting the expression for $p_i^\ast$ and using $\sum_i w_i=1$,
	\begin{align*}
		P^\ast &= \sum_{i=1}^n w_i\eta_i r_i + \left(1-\sum_{i=1}^n w_i\eta_i\right)P^\ast\\
		\left(\sum_{i=1}^n w_i\eta_i\right)P^\ast &= \sum_{i=1}^n w_i\eta_i r_i \\
		P^\ast &= \frac{\sum_{i=1}^n w_i\eta_i r_i} {\sum_{i=1}^n w_i\eta_i},
	\end{align*}
	where the division is valid since \(\lambda_i<1\) implies \(\eta_i>0\), hence \(\sum_i w_i\eta_i>0\).
	Finally, substituting \(p_i^\ast=\eta_i r_i+(1-\eta_i)P^\ast\) into \(q_i^\ast =\rho p_i^\ast+(1-\rho)P^\ast\),
	\begin{align*}
		q_i^\ast &= \rho\eta_i r_i+\rho(1-\eta_i)P^\ast+(1-\rho)P^\ast
		=\rho\eta_i r_i+(1-\rho\eta_i)P^\ast.
	\end{align*}
	This proves the claimed form of the equilibrium.
	
	\xhdr{Convergence to equilibrium}
	Define the deviations
	$x_i^t:=p_i^t-p_i^\ast$ and $y_i^t:=q_i^t-q_i^\ast$.
	Using the author update and the equilibrium identity,
	\begin{align*}
		x_i^{t+1} &=p_i^{t+1}-p_i^\ast
		= (1-\alpha_i)(p_i^t-p_i^\ast) + \alpha_i\lambda_i(q_i^t-q_i^\ast) \\
		&=(1-\alpha_i)x_i^t+\alpha_i\lambda_i y_i^t, \\
		\|x_i^{t+1}\|_1 &\leq (1-\alpha_i)\|x_i^t\|_1+\alpha_i\lambda_i\|y_i^t\|_1.
	\end{align*}
	Define $R^t:= \max\left\{ \max_i\|x_i^t\|_1,\, \max_i\|y_i^t\|_1 \right\}$ and
	$\mu:=\min_{i\in[n]}\alpha_i(1-\lambda_i)$.
	Since \(\alpha_i>0\) and \(\lambda_i<1\) for every \(i\), we have \(\mu>0\). Then
	\[
	\|x_i^{t+1}\|_1 \leq \bigl(1-\alpha_i(1-\lambda_i)\bigr)R^t \leq (1-\mu)R^t.
	\]
	Next, consider the model deviations. From the model update and the equilibrium identity,
	\begin{align*}
		y_i^{t+1} &= \beta y_i^t+\gamma x_i^{t+1} +\delta\sum_{j=1}^n w_jx_j^{t+1} \\
		\|y_i^{t+1}\|_1 &\leq \beta\|y_i^t\|_1 + \gamma\|x_i^{t+1}\|_1 + \delta\sum_{j=1}^n w_j\|x_j^{t+1}\|_1 \\
		&\leq \left(\beta+(\gamma+\delta)(1-\mu)\right)R^t \\
		&= \left(\beta+(1-\beta)(1-\mu)\right)R^t
		= \left(1-(1-\beta)\mu\right)R^t,
	\end{align*}
	using $\|x_j^{t+1}\|_1\leq(1-\mu)R^t$, $\sum_j w_j=1$, and \(\gamma+\delta=1-\beta\).
	Let $\kappa:=1-(1-\beta)\mu$. Since $\beta<1$ and $\mu>0$, we have $\kappa<1$; moreover
	$\kappa\geq 1-\mu$. Therefore, the bounds for both $x_i^{t+1}$ and $y_i^{t+1}$ imply
	$R^{t+1}\leq \kappa R^t$, and recursively $R^t\leq \kappa^t R^0$. Since $\kappa<1$, we have $R^t\to 0$, so
	$p_i^t\to p_i^\ast$ and $q_i^t\to q_i^\ast$ for every author $i$, establishing convergence.
	
	\xhdr{Convergence time}
	Since all $p_i^0,p_i^\ast,q_i^0,q_i^\ast$ are probability distributions, their $\ell_1$-distances are at most $2$. Hence $R^0\leq 2$ and $R^t\leq 2\kappa^t$. To guarantee
	$\max_i\|p_i^t-p_i^\ast\|_1\leq \varepsilon$ and
	$\max_i\|q_i^t-q_i^\ast\|_1\leq \varepsilon$,
	it is sufficient that $2\kappa^t\leq \varepsilon$, equivalently $t\geq \frac{\log(2/\varepsilon)}{-\log\kappa}$. Since
	$\kappa=1-(1-\beta)\mu$ and $-\log(1-z)\geq z$ for $z\in(0,1)$, it is sufficient to take
	\[
	t\geq \frac{\log(2/\varepsilon)} {(1-\beta)\min_i\alpha_i(1-\lambda_i)}.
	\]
	Finally, since Jensen--Shannon divergence is continuous on the probability simplex, for every pair $i,j$,
	\[
	JS(p_i^t,p_j^t)\to JS(p_i^\ast,p_j^\ast), \qquad
	JS(q_i^t,q_j^t)\to JS(q_i^\ast,q_j^\ast).
	\]
	Averaging over all ordered pairs \(i\neq j\), we obtain $D^t \to D^\ast$ and $Q^t \to Q^\ast$. This completes the proof.
\end{proof}

\propcommonconformity*
\begin{proof}
    For IM~1, \Cref{prop:fixed_conformity_mixture} gives
    $a_i(\lambda)=(1-\lambda)r_i+\lambda q^0$, hence
    $a_i(\lambda)-a_j(\lambda)=(1-\lambda)(r_i-r_j)$. Under
    \Cref{prop:shared_recursive_conformity},
    $p_i^\ast=(1-\lambda)r_i+\lambda q^\ast$, giving the same difference for IM~2.
    Under \Cref{prop:personalized_recursive}, $\eta_i=\eta$ for every $i$, hence
    $P^\ast=\sum_i w_i r_i$ and $p_i^\ast=\eta r_i+(1-\eta)P^\ast$, so
    $p_i^\ast-p_j^\ast=\eta(r_i-r_j)$. Since $\widehat{D}$ is homogeneous of degree
    two in pairwise differences and $\eta/(1-\lambda)=1/(1-\rho\lambda)$, the
    diversity relation follows.
\end{proof}


\section{Omitted Proofs from Section~\ref{sec:strategic}}
\label{app:proofs-strategic}
\lemmaexternality*
\begin{proof}
	From \eqref{eq:utility-sep}, $\lambda_i$ enters $U_j$ ($j \neq i$) only through the term $\frac{\theta_j}{2(n-1)} \sigma_i^2 d_i^2$. Differentiating with $\sigma_i = 1-\lambda_i$ gives $\partial U_j / \partial \lambda_i = -\frac{\theta_j}{n-1} \sigma_i d_i^2$.
\end{proof}

\thmwedge*
\begin{proof}
	\emph{(i)} By \eqref{eq:utility-sep}, $U_i$ is additively separable: the only term involving $\lambda_i$ is
	\[
	h_i(\sigma_i) := \frac{d_i^2}{2}\Big[ (\theta_i - b_i)\sigma_i^2 - c_i(1-\sigma_i)^2 \Big],
	\]
	which is independent of $\lambda_{-i}$. Hence the maximizer of $h_i$ over $\sigma_i \in [0,1]$ is a dominant strategy. We have
	\[
	h_i'(\sigma_i) = d_i^2 \Big[ (\theta_i - b_i)\sigma_i + c_i(1-\sigma_i) \Big]
	= d_i^2 \Big[ c_i - (b_i + c_i - \theta_i)\sigma_i \Big].
	\]
	\emph{Case $b_i > \theta_i$.} Then $b_i + c_i - \theta_i > c_i > 0$, $h_i$ is strictly concave, and the unconstrained maximizer $\sigma_i^\ast = c_i / (b_i + c_i - \theta_i) \in (0,1)$ is interior, giving $\lambda_i^{\mathrm{NE}} = 1 - \sigma_i^\ast = (b_i - \theta_i)/(b_i + c_i - \theta_i)$.
	\emph{Case $b_i \le \theta_i$.} Then $h_i'(\sigma_i) = d_i^2\big[ c_i(1-\sigma_i) + (\theta_i - b_i)\sigma_i \big] \ge 0$ on $[0,1]$, strictly positive on $[0,1)$, so $h_i$ is maximized at $\sigma_i = 1$, i.e.\ $\lambda_i^{\mathrm{NE}} = 0$. Both cases match the stated formula, and uniqueness of the maximizer in each case gives uniqueness of the equilibrium.
	
	\emph{(ii)} Summing \eqref{eq:utility-sep} over $i$ and collecting, for each $i$, the coefficient of $\sigma_i^2 d_i^2/2$---namely $(\theta_i - b_i)$ from $U_i$ and $\frac{\theta_j}{n-1}$ from each $U_j$ with $j \neq i$, the latter summing to $\bar{\theta}_{-i}$---we obtain
	\[
	W(\lambda) \;=\; \sum_{i=1}^{n} \frac{d_i^2}{2} \Big[ \big(\theta_i + \bar{\theta}_{-i} - b_i\big)\, \sigma_i^2 \;-\; c_i\,(1-\sigma_i)^2 \Big].
	\]
	This is again additively separable across authors, and each summand has exactly the form $h_i$ with $\theta_i$ replaced by $\theta_i + \bar{\theta}_{-i}$. The argument of part (i) applied verbatim with this replacement yields the stated $\lambda_i^{\mathrm{SO}}$ and its uniqueness.
	
	\emph{(iii)} Define, for $x \ge 0$,
	\[
	\varphi_i(x) := \frac{(b_i - \theta_i - x)_+}{\,(b_i - \theta_i - x)_+ + c_i\,},
	\]
	so that $\lambda_i^{\mathrm{NE}} = \varphi_i(0)$ and $\lambda_i^{\mathrm{SO}} = \varphi_i(\bar{\theta}_{-i})$. The map $y \mapsto y/(y+c_i)$ is strictly increasing on $[0,\infty)$ and $x \mapsto (b_i - \theta_i - x)_+$ is nonincreasing, strictly decreasing while positive; hence $\varphi_i$ is nonincreasing, strictly decreasing while positive. This gives $\lambda_i^{\mathrm{SO}} \le \lambda_i^{\mathrm{NE}}$ with strictness exactly when $\varphi_i(0) > 0$ and $\bar{\theta}_{-i} > 0$. The diversity comparison follows from \eqref{eq:diversity-sep}, since $\sigma_i^{\mathrm{NE}} \le \sigma_i^{\mathrm{SO}}$ for every $i$; the welfare comparison holds because $\lambda^{\mathrm{SO}}$ maximizes $W$.
\end{proof}

\corsymmetric*
\begin{proof}
	Parts (i) and (ii) and the equilibrium expressions in (iii) follow from \Cref{thm:wedge} with $\bar{\theta}_{-i} = \theta$; the $\PoM$ expressions follow from \eqref{eq:diversity-sep}, since in the symmetric case $\widehat{D}_\infty = \sigma^2 d^2$ and $\sigma^{\mathrm{NE}} = \frac{c}{b+c-\theta}$, while $\sigma^{\mathrm{SO}} = 1$ in regime (ii) and $\sigma^{\mathrm{SO}} = \frac{c}{b+c-2\theta}$ in regime (iii).
	
	For the welfare loss in (iii), write $A := b + c - 2\theta > 0$. From the proof of \Cref{thm:wedge}(ii), per-author welfare at a symmetric profile $\sigma$ is $\frac{d^2}{2}\, w(\sigma)$ with
	\[
	w(\sigma) = (2\theta - b)\sigma^2 - c(1-\sigma)^2 = -A\sigma^2 + 2c\sigma - c .
	\]
	Then $w(\sigma^{\mathrm{SO}}) = w(c/A) = c^2/A - c$, while with $\sigma^{\mathrm{NE}} = c/(A+\theta)$,
	\[
	w(\sigma^{\mathrm{NE}}) = -\frac{A c^2}{(A+\theta)^2} + \frac{2c^2}{A+\theta} - c = \frac{c^2 (A + 2\theta)}{(A+\theta)^2} - c .
	\]
	Subtracting,
	\[
	w(\sigma^{\mathrm{SO}}) - w(\sigma^{\mathrm{NE}})
	= c^2 \cdot \frac{(A+\theta)^2 - A(A + 2\theta)}{A (A+\theta)^2}
	= \frac{c^2\, \theta^2}{A (A+\theta)^2},
	\]
	since $(A+\theta)^2 - A(A+2\theta) = \theta^2$. Substituting $A = b+c-2\theta$ and $A + \theta = b+c-\theta$ completes the proof.
\end{proof}


\section{Correlated Author Signatures}
\label{app:correlated-signatures}

We now relax the orthogonality assumption in Definition~1 and
characterize the strategic-conformity game for arbitrary author
signatures. Recall that
\[
u_i := r_i-q^0,
\qquad
d_i^2 := \lVert u_i\rVert_2^2,
\qquad
\sigma_i := 1-\lambda_i,
\]
so that the limiting style of author $i$ satisfies
\[
a_i(\lambda_i)-q^0=\sigma_i u_i.
\]
Let
\[
g_{ij}:=\langle u_i,u_j\rangle
\]
and let $G=(g_{ij})_{i,j\in[n]}$ denote the Gram matrix of the
author signatures. Thus, $g_{ii}=d_i^2$, while $g_{ij}$ measures
the alignment between the directions in which authors $i$ and $j$
differ from the shared model-induced norm. Orthogonal signatures
correspond to $g_{ij}=0$ for every $i\neq j$.

\begin{proposition}[Strategic conformity with correlated signatures]
\label{prop:correlated-signatures}
Consider the game with payoffs in Equation~(3), without imposing
orthogonality.

\begin{enumerate}
    \item For every author $i$,
    \begin{align}
    U_i(\sigma)
    ={}&
    \frac{d_i^2}{2}
    \left[
        (\theta_i-b_i)\sigma_i^2
        -c_i(1-\sigma_i)^2
    \right]
    \nonumber\\
    &+
    \frac{\theta_i}{2(n-1)}
    \sum_{j\neq i}d_j^2\sigma_j^2
    -
    \frac{\theta_i}{n-1}
    \sigma_i\sum_{j\neq i}g_{ij}\sigma_j.
    \label{eq:correlated-utility}
    \end{align}
    The corresponding quadratic long-run diversity is
    \begin{equation}
    D^\infty(\lambda)
    =
    \frac{1}{n}\sum_{i=1}^n d_i^2\sigma_i^2
    -
    \frac{2}{n(n-1)}
    \sum_{i<j}g_{ij}\sigma_i\sigma_j.
    \label{eq:correlated-diversity}
    \end{equation}

    \item For every pair $i\neq j$,
    \begin{equation}
    \frac{\partial U_j}{\partial\lambda_i}
    =
    \frac{\theta_j}{n-1}
    \left(
        \sigma_jg_{ij}-\sigma_i d_i^2
    \right).
    \label{eq:correlated-externality}
    \end{equation}
    Consequently, an increase in author $i$'s conformity imposes
    a nonpositive externality on author $j$ if and only if
    \begin{equation}
    \sigma_i d_i^2\geq \sigma_jg_{ij}.
    \label{eq:negative-externality-condition}
    \end{equation}
    In particular, the externality is nonpositive at every
    conformity profile whenever $g_{ij}\leq 0$.

    \item Define
    \[
    A_i:=b_i+c_i-\theta_i.
    \]
    If $A_i>0$, author $i$'s payoff is strictly concave in
    $\sigma_i$, conditional on $\sigma_{-i}$, and their unique
    best response is
    \begin{equation}
    \operatorname{BR}_i(\sigma_{-i})
    =
    \Pi_{[0,1]}
    \left[
        \frac{
            c_i-
            \dfrac{\theta_i}{(n-1)d_i^2}
            \sum_{j\neq i}g_{ij}\sigma_j
        }{
            A_i
        }
    \right],
    \label{eq:correlated-best-response}
    \end{equation}
    where $\Pi_{[0,1]}$ denotes projection onto $[0,1]$.

    If, in addition,
    \begin{equation}
    \max_{i\in[n]}
    \frac{\theta_i}
    {(n-1)d_i^2A_i}
    \sum_{j\neq i}|g_{ij}|
    <1,
    \label{eq:best-response-contraction}
    \end{equation}
    then the joint best-response map is a contraction and the game
    has a unique Nash equilibrium.

    If the Nash equilibrium is interior, its
    retained-distinctiveness vector satisfies
    \begin{equation}
    N\sigma^{\mathrm{NE}}=h,
    \label{eq:correlated-ne-system}
    \end{equation}
    where
    \[
    h_i:=c_i d_i^2
    \]
    and
    \begin{equation}
    N_{ii}=d_i^2(b_i+c_i-\theta_i),
    \qquad
    N_{ij}=\frac{\theta_i}{n-1}g_{ij},
    \quad i\neq j.
    \label{eq:correlated-ne-matrix}
    \end{equation}

    \item Let
    \[
    \bar{\theta}_{-i}
    :=
    \frac{1}{n-1}\sum_{j\neq i}\theta_j.
    \]
    Utilitarian welfare can be written as
    \begin{equation}
    W(\sigma)
    =
    C+h^\top\sigma-\frac{1}{2}\sigma^\top S\sigma,
    \label{eq:correlated-welfare}
    \end{equation}
    where $C$ is independent of $\sigma$, $h_i=c_i d_i^2$, and
    \begin{equation}
    S_{ii}
    =
    d_i^2
    \left(
        b_i+c_i-\theta_i-\bar{\theta}_{-i}
    \right),
    \label{eq:correlated-welfare-diagonal}
    \end{equation}
    \begin{equation}
    S_{ij}
    =
    \frac{\theta_i+\theta_j}{n-1}g_{ij},
    \qquad i\neq j.
    \label{eq:correlated-welfare-offdiagonal}
    \end{equation}
    If $S$ is positive definite, welfare is strictly concave and
    has a unique maximizer on $[0,1]^n$. If the maximizer is
    interior, it is characterized by
    \begin{equation}
    S\sigma^{\mathrm{SO}}=h.
    \label{eq:correlated-so-system}
    \end{equation}
\end{enumerate}
\end{proposition}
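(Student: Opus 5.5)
The whole proposition reduces to expanding squared norms in terms of the Gram matrix $G$, so I will start from the identities already recorded in the paper: $a_i-q^0=\sigma_i u_i$, $a_i-r_i=-\lambda_i u_i=-(1-\sigma_i)u_i$, and $a_i-a_j=\sigma_i u_i-\sigma_j u_j$. Then $\|a_i-q^0\|_2^2=\sigma_i^2 d_i^2$ and $\|a_i-r_i\|_2^2=(1-\sigma_i)^2 d_i^2$.

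\textbf{Part 1.} The only new expansion is
\[
\|a_i-a_j\|_2^2=\sigma_i^2d_i^2+\sigma_j^2d_j^2-2\sigma_i\sigma_j g_{ij}.
\]
Substituting into \eqref{eq:utility} and summing over $j\neq i$ with weight $\theta_i/(2(n-1))$ gives three pieces: $\frac{\theta_i}{2}\sigma_i^2d_i^2$, which merges into the bracket as in \eqref{eq:utility-sep}; the $\sum_{j\ne i} d_j^2\sigma_j^2$ term; and the cross term $-\frac{\theta_i}{n-1}\sigma_i\sum_{j\neq i} g_{ij}\sigma_j$. This is \eqref{eq:correlated-utility}.

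For the diversity, I will use the same expansion in $\widehat D=\frac{1}{2n(n-1)}\sum_{i\ne j}\|a_i-a_j\|^2$. Each $\sigma_i^2d_i^2$ appears $2(n-1)$ times, which gives $\frac1n\sum_i d_i^2\sigma_i^2$. Each unordered cross pair appears twice with coefficient $-2$, which gives $-\frac{2}{n(n-1)}\sum_{i<j}g_{ij}\sigma_i\sigma_j$.

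\textbf{Part 2.} In $U_j$, the variable $\sigma_i$ with $i\ne j$ appears in two places: $\frac{\theta_j}{2(n-1)}d_i^2\sigma_i^2$ and $-\frac{\theta_j}{n-1}\sigma_j g_{ji}\sigma_i$. Differentiating and using $\partial\sigma_i/\partial\lambda_i=-1$ gives \eqref{eq:correlated-externality}. The sign condition is then read off directly. When $g_{ij}\le 0$, the condition holds at every profile because $\sigma_i,\sigma_j\ge0$.

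\textbf{Part 3.} As a function of $\sigma_i$, $U_i$ is the quadratic
\[
\frac{d_i^2}{2}\Big[-A_i\sigma_i^2+2c_i\sigma_i\Big]-\frac{\theta_i}{n-1}\sigma_i s_i+\text{const},
\qquad s_i:=\sum_{j\ne i}g_{ij}\sigma_j.
\]
When $A_i>0$ this is strictly concave. A one-dimensional concave quadratic is maximized on $[0,1]$ at the projection of its stationary point, and the stationary point is $(c_i-\theta_i s_i/((n-1)d_i^2))/A_i$, which is \eqref{eq:correlated-best-response}.

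For uniqueness, I note that the projection $\Pi_{[0,1]}$ is 1-Lipschitz. So in the sup-norm, the $i$-th coordinate of the best-response map moves by at most $\frac{\theta_i}{(n-1)d_i^2A_i}\sum_{j\ne i}|g_{ij}|\,\|\sigma-\sigma'\|_\infty$. Under \eqref{eq:best-response-contraction} the map is therefore a contraction on the complete space $[0,1]^n$. Banach's fixed-point theorem gives a unique fixed point, and fixed points of the best-response map are exactly Nash equilibria. At an interior equilibrium the projection is inactive; multiplying the first-order condition by $d_i^2$ gives $d_i^2A_i\sigma_i+\frac{\theta_i}{n-1}\sum_{j\ne i} g_{ij}\sigma_j=c_id_i^2$, which is \eqref{eq:correlated-ne-system}.

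\textbf{Part 4.} I will sum \eqref{eq:correlated-utility} over $i$ and collect coefficients. The linear term $c_id_i^2\sigma_i$ comes from expanding $-c_i(1-\sigma_i)^2d_i^2/2$. The diagonal coefficient of $-\frac12\sigma_i^2$ is $d_i^2(b_i+c_i-\theta_i)$ from $U_i$, reduced by $d_i^2\bar\theta_{-i}$ from the other authors' terms, exactly as in the proof of \Cref{thm:wedge}(ii). The cross term $\sigma_i\sigma_j g_{ij}$ appears in both $U_i$ and $U_j$ with total coefficient $-\frac{\theta_i+\theta_j}{n-1}$. Writing it as $-\frac12(S_{ij}+S_{ji})\sigma_i\sigma_j$ gives \eqref{eq:correlated-welfare-offdiagonal}, and the constant is $C=-\frac12\sum_i c_id_i^2$.

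If $S\succ0$, then $W$ is strictly concave on the compact convex set $[0,1]^n$, so it has a unique maximizer. At an interior maximizer $\nabla W=h-S\sigma=0$, which is \eqref{eq:correlated-so-system}.

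The main obstacle is bookkeeping rather than any conceptual step: the factor-of-two conventions for the cross terms in $W$ and $\widehat D$ (ordered versus unordered pairs) have to match the stated $S_{ij}$ and the coefficient in \eqref{eq:correlated-diversity}. I will check them against the orthogonal case, where every formula should reduce to \Cref{thm:wedge}.
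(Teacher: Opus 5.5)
Your proposal is correct and follows essentially the same route as the paper. You expand $\|a_i-a_j\|_2^2=\sigma_i^2d_i^2+\sigma_j^2d_j^2-2\sigma_i\sigma_jg_{ij}$ and read off $U_i$, the diversity and the externality. You take the projected stationary point as the best response and use nonexpansiveness of $\Pi_{[0,1]}$ with Banach's theorem for uniqueness. You then collect coefficients for $W$; your $C=-\frac12\sum_i c_id_i^2$ and the factor-of-two bookkeeping for $S_{ij}$ both check out.

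The paper's proof also glosses over two small points. The ``only if'' direction in Part~2 needs $\theta_j>0$, since for $\theta_j=0$ the externality vanishes identically regardless of the sign condition. And the contraction argument tacitly requires $A_i>0$ for every $i$, so that the best-response formula applies in each coordinate.
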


\begin{proof}
For any $i\neq j$,
\begin{align}
\lVert a_i-a_j\rVert_2^2
&=
\lVert \sigma_i u_i-\sigma_j u_j\rVert_2^2
\nonumber\\
&=
\sigma_i^2d_i^2+\sigma_j^2d_j^2
-2\sigma_i\sigma_jg_{ij}.
\label{eq:correlated-distance}
\end{align}
Substituting Equation~\eqref{eq:correlated-distance} into
Equation~(3), the terms involving $\sigma_i^2d_i^2$ appear once
for every $j\neq i$, giving
\[
\frac{\theta_i}{2(n-1)}
\sum_{j\neq i}\sigma_i^2d_i^2
=
\frac{\theta_i}{2}\sigma_i^2d_i^2.
\]
Collecting this term with the legibility and authenticity terms
gives the first line of Equation~\eqref{eq:correlated-utility}.
The remaining squared-signature and cross terms give the second
line.

For quadratic diversity, summing
Equation~\eqref{eq:correlated-distance} over unordered pairs gives
\[
\sum_{i<j}
\left(
    \sigma_i^2d_i^2+\sigma_j^2d_j^2
\right)
=
(n-1)\sum_i\sigma_i^2d_i^2.
\]
Substitution into the definition of quadratic long-run diversity
gives Equation~\eqref{eq:correlated-diversity}.

To obtain the externality formula, consider $i\neq j$. The only
part of $U_j$ that depends on $\lambda_i$ is the pairwise term
involving authors $i$ and $j$. Since
\[
\frac{\partial\sigma_i}{\partial\lambda_i}=-1,
\]
we have
\begin{align}
\frac{\partial U_j}{\partial\lambda_i}
&=
\frac{\theta_j}{2(n-1)}
\frac{\partial}{\partial\lambda_i}
\lVert\sigma_j u_j-\sigma_i u_i\rVert_2^2
\nonumber\\
&=
\frac{\theta_j}{n-1}
\left\langle
    \sigma_j u_j-\sigma_i u_i,u_i
\right\rangle
\nonumber\\
&=
\frac{\theta_j}{n-1}
\left(
    \sigma_jg_{ij}-\sigma_i d_i^2
\right).
\end{align}
This proves Equations~\eqref{eq:correlated-externality}
and~\eqref{eq:negative-externality-condition}. If $g_{ij}\leq0$,
both $\sigma_jg_{ij}$ and $-\sigma_i d_i^2$ are nonpositive, so
the externality is nonpositive.

Differentiating Equation~\eqref{eq:correlated-utility} with respect
to $\sigma_i$ gives
\begin{equation}
\frac{\partial U_i}{\partial\sigma_i}
=
d_i^2
\left[
    c_i-(b_i+c_i-\theta_i)\sigma_i
\right]
-
\frac{\theta_i}{n-1}
\sum_{j\neq i}g_{ij}\sigma_j.
\label{eq:correlated-own-derivative}
\end{equation}
Moreover,
\[
\frac{\partial^2U_i}{\partial\sigma_i^2}
=
-d_i^2A_i.
\]
Thus, $A_i>0$ implies strict concavity in $\sigma_i$. Solving
Equation~\eqref{eq:correlated-own-derivative} and projecting the
resulting unconstrained maximizer onto the feasible interval gives
Equation~\eqref{eq:correlated-best-response}.

Projection onto a closed interval is nonexpansive. Hence, for any
two profiles $\sigma$ and $\sigma'$,
\begin{align}
\left|
    \operatorname{BR}_i(\sigma_{-i})
    -
    \operatorname{BR}_i(\sigma'_{-i})
\right|
&\leq
\frac{\theta_i}
{(n-1)d_i^2A_i}
\sum_{j\neq i}
|g_{ij}|
\left|\sigma_j-\sigma'_j\right|
\nonumber\\
&\leq
\frac{\theta_i}
{(n-1)d_i^2A_i}
\sum_{j\neq i}
|g_{ij}|
\lVert\sigma-\sigma'\rVert_\infty.
\end{align}
Condition~\eqref{eq:best-response-contraction} therefore makes the
joint best-response map a contraction in the sup norm. Banach's
fixed-point theorem gives existence and uniqueness of the Nash
equilibrium. At an interior equilibrium,
Equation~\eqref{eq:correlated-own-derivative} is zero for every
$i$, which is exactly the linear system
$N\sigma^{\mathrm{NE}}=h$.

For welfare, each unordered pair $\{i,j\}$ appears in both
authors' distinctiveness payoffs, with total coefficient
\[
\frac{\theta_i+\theta_j}{2(n-1)}.
\]
Therefore,
\begin{align}
W(\sigma)
={}&
-\frac{1}{2}\sum_i
\left[
    b_i d_i^2\sigma_i^2
    +
    c_i d_i^2(1-\sigma_i)^2
\right]
\nonumber\\
&+
\sum_{i<j}
\frac{\theta_i+\theta_j}{2(n-1)}
\lVert\sigma_i u_i-\sigma_j u_j\rVert_2^2.
\label{eq:correlated-expanded-welfare}
\end{align}
Expanding the squared distances and collecting coefficients gives
Equations~\eqref{eq:correlated-welfare}--%
\eqref{eq:correlated-welfare-offdiagonal}. If $S$ is positive
definite, the Hessian of welfare is $-S$, so welfare is strictly
concave. Its interior first-order condition is
\[
\nabla W(\sigma)=h-S\sigma=0,
\]
which gives Equation~\eqref{eq:correlated-so-system}.
\end{proof}

\begin{corollary}[Over-conformity with nonpositively aligned signatures]
\label{cor:nonpositive-signatures}
Suppose
\begin{equation}
g_{ij}\leq0
\qquad
\text{for every }i\neq j.
\label{eq:nonpositive-alignments}
\end{equation}
Assume that the Nash equilibrium and social optimum are interior
and that
\begin{equation}
d_i^2(b_i+c_i-\theta_i)
>
\frac{\theta_i}{n-1}
\sum_{j\neq i}|g_{ij}|
\label{eq:ne-diagonal-dominance}
\end{equation}
and
\begin{equation}
d_i^2
\left(
    b_i+c_i-\theta_i-\bar{\theta}_{-i}
\right)
>
\frac{1}{n-1}
\sum_{j\neq i}
(\theta_i+\theta_j)|g_{ij}|
\label{eq:so-diagonal-dominance}
\end{equation}
for every author $i$. Then the Nash equilibrium and social
optimum are unique and
\begin{equation}
\sigma_i^{\mathrm{SO}}
\geq
\sigma_i^{\mathrm{NE}}
\qquad
\text{for every }i.
\label{eq:sigma-overconformity}
\end{equation}
Equivalently,
\begin{equation}
\lambda_i^{\mathrm{NE}}
\geq
\lambda_i^{\mathrm{SO}}
\qquad
\text{for every }i.
\label{eq:lambda-overconformity}
\end{equation}
The inequality for author $i$ is strict whenever
\begin{equation}
\sigma_i^{\mathrm{NE}}>0
\qquad\text{and}\qquad
\bar{\theta}_{-i}>0.
\label{eq:strict-overconformity-condition}
\end{equation}
Consequently,
\begin{equation}
D^\infty(\lambda^{\mathrm{NE}})
\leq
D^\infty(\lambda^{\mathrm{SO}}).
\label{eq:correlated-diversity-comparison}
\end{equation}
\end{corollary}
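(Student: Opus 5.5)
Under \eqref{eq:nonpositive-alignments}, both interior fixed-point systems $N\sigma^{\mathrm{NE}}=h$ and $S\sigma^{\mathrm{SO}}=h$ from \Cref{prop:correlated-signatures} involve Z-matrices. With the row diagonal dominance hypotheses \eqref{eq:ne-diagonal-dominance} and \eqref{eq:so-diagonal-dominance}, both are nonsingular M-matrices and hence inverse-nonnegative. The plan is to compare the two solutions through this monotonicity property.

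\textbf{Uniqueness.} For the Nash equilibrium, \eqref{eq:ne-diagonal-dominance} is exactly condition \eqref{eq:best-response-contraction} after dividing by $d_i^2A_i$. It also forces $A_i>0$. Part~3 of \Cref{prop:correlated-signatures} then gives a unique equilibrium. For the social optimum, I need $S$ to be positive definite. Condition \eqref{eq:so-diagonal-dominance} says $S$ has positive diagonal and strict row diagonal dominance. $S$ is symmetric because $S_{ij}=S_{ji}$. Gershgorin's theorem then puts every real eigenvalue in $(0,\infty)$, so $S\succ0$, and part~4 gives a unique maximizer.

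\textbf{Comparison.} I subtract the two linear systems. The off-diagonal entries have the same sign: $N_{ij}=\frac{\theta_i}{n-1}g_{ij}\le0$ and $S_{ij}=\frac{\theta_i+\theta_j}{n-1}g_{ij}\le 0$. Moreover $S_{ij}\le N_{ij}$, since $\theta_j g_{ij}\le 0$. The diagonals satisfy $S_{ii}=N_{ii}-d_i^2\bar\theta_{-i}\le N_{ii}$. Hence $S\le N$ entrywise, and therefore
\[
S\sigma^{\mathrm{NE}}\le N\sigma^{\mathrm{NE}}=h=S\sigma^{\mathrm{SO}},
\]
using $\sigma^{\mathrm{NE}}\ge0$. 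This gives $S(\sigma^{\mathrm{SO}}-\sigma^{\mathrm{NE}})\ge 0$. A strictly diagonally dominant Z-matrix with positive diagonal is a nonsingular M-matrix, so $S^{-1}\ge0$ entrywise. It follows that $\sigma^{\mathrm{SO}}-\sigma^{\mathrm{NE}}\ge0$, which is \eqref{eq:sigma-overconformity}, and \eqref{eq:lambda-overconformity} follows immediately.

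\textbf{Strictness.} Let $v:=S(\sigma^{\mathrm{SO}}-\sigma^{\mathrm{NE}})=(N-S)\sigma^{\mathrm{NE}}$. Its $i$-th entry is
\[
v_i=d_i^2\bar\theta_{-i}\sigma_i^{\mathrm{NE}}-\frac{1}{n-1}\sum_{j\ne i}\theta_j g_{ij}\sigma_j^{\mathrm{NE}},
\]
and every term is nonnegative. Under \eqref{eq:strict-overconformity-condition}, $v_i>0$. The inverse of a nonsingular M-matrix has strictly positive diagonal, so $(S^{-1})_{ii}>0$. Combined with $S^{-1}\ge 0$ and $v\ge 0$, this gives
\[
(\sigma^{\mathrm{SO}}-\sigma^{\mathrm{NE}})_i=\sum_k (S^{-1})_{ik}v_k\ge (S^{-1})_{ii}v_i>0.
\]

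\textbf{Diversity.} In \eqref{eq:correlated-diversity}, every term is nondecreasing in each $\sigma_i\ge0$: the first sum has positive weights $d_i^2$, and the cross terms carry the coefficient $-g_{ij}\ge0$. Since $\sigma^{\mathrm{SO}}\ge\sigma^{\mathrm{NE}}\ge0$ coordinatewise, \eqref{eq:correlated-diversity-comparison} follows.

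\textbf{Main obstacle.} The key step is correctly invoking the M-matrix facts: inverse-nonnegativity of $S$, and the positive diagonal of $S^{-1}$ used for strictness. I would cite a standard reference for these, or prove inverse-nonnegativity via a Neumann series for $D^{-1}(D-S)$, where $D$ is the diagonal part of $S$. That series converges because strict row dominance makes the $\infty$-norm of $D^{-1}(D-S)$ less than $1$. The same expansion shows $(S^{-1})_{ii}\ge 1/S_{ii}>0$.
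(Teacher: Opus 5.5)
Your proposal is correct and follows essentially the same route as the paper. Both arguments use four steps. First, the entrywise comparison $S\le N$. Second, the inequality $S\sigma^{\mathrm{NE}}\le N\sigma^{\mathrm{NE}}=h=S\sigma^{\mathrm{SO}}$, inverted via the nonnegative inverse of the $M$-matrix $S$. Third, strictness from the positive $i$th entry of $(N-S)\sigma^{\mathrm{NE}}$ combined with the positive diagonal of $S^{-1}$. Fourth, coordinatewise monotonicity of the diversity when $g_{ij}\le 0$.

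Where you go further, you only make explicit what the paper leaves to the phrase ``nonsingular $M$-matrices''. Your uniqueness argument (the contraction condition for the Nash equilibrium, and Gershgorin positive definiteness of the symmetric $S$ for the social optimum) is one such step. Your Neumann-series proof of $S^{-1}\ge 0$ and $(S^{-1})_{ii}\ge 1/S_{ii}$ is the other.
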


\begin{proof}
Under Equation~\eqref{eq:nonpositive-alignments}, both $N$ and
$S$ have nonpositive off-diagonal entries.
Conditions~\eqref{eq:ne-diagonal-dominance}
and~\eqref{eq:so-diagonal-dominance} give positive, strictly
dominant diagonal entries. Consequently, $N$ and $S$ are
nonsingular $M$-matrices and have entrywise nonnegative inverses.

Furthermore, $S\leq N$ entrywise. On the diagonal,
\[
S_{ii}
=
N_{ii}-d_i^2\bar{\theta}_{-i}
\leq N_{ii}.
\]
For $i\neq j$,
\[
S_{ij}-N_{ij}
=
\frac{\theta_j}{n-1}g_{ij}
\leq0.
\]
Because $\sigma^{\mathrm{NE}}\geq0$,
\[
S\sigma^{\mathrm{NE}}
\leq
N\sigma^{\mathrm{NE}}
=
h.
\]
Multiplication by the nonnegative matrix $S^{-1}$ gives
\[
\sigma^{\mathrm{NE}}
\leq
S^{-1}h
=
\sigma^{\mathrm{SO}},
\]
which proves Equations~\eqref{eq:sigma-overconformity}
and~\eqref{eq:lambda-overconformity}.

For strictness, observe that
\[
h-S\sigma^{\mathrm{NE}}
=
(N-S)\sigma^{\mathrm{NE}}.
\]
Under Equation~\eqref{eq:nonpositive-alignments}, the matrix
$N-S$ is entrywise nonnegative. Its $i$th diagonal contribution is
\[
d_i^2\bar{\theta}_{-i}\sigma_i^{\mathrm{NE}},
\]
which is strictly positive whenever
Equation~\eqref{eq:strict-overconformity-condition} holds.
Because $S^{-1}$ is entrywise nonnegative and has strictly
positive diagonal entries, it follows that
\[
\sigma_i^{\mathrm{SO}}>
\sigma_i^{\mathrm{NE}},
\]
or equivalently,
\[
\lambda_i^{\mathrm{NE}}>
\lambda_i^{\mathrm{SO}}.
\]

Finally, when $g_{ij}\leq0$, each pairwise squared distance
\[
\lVert\sigma_i u_i-\sigma_j u_j\rVert_2^2
=
\sigma_i^2d_i^2+\sigma_j^2d_j^2
-2\sigma_i\sigma_jg_{ij}
\]
is nondecreasing in each of $\sigma_i$ and $\sigma_j$ on
$[0,1]^2$. The componentwise inequality
$\sigma^{\mathrm{SO}}\geq\sigma^{\mathrm{NE}}$ therefore implies
Equation~\eqref{eq:correlated-diversity-comparison}.
\end{proof}

\begin{remark}[Positive alignment can reverse the externality]
\label{rem:positive-alignment}
For unrestricted correlated signatures, the conformity externality
need not be negative. For example, suppose that authors $i$ and
$j$ have identical signature vectors,
\[
u_i=u_j,
\]
so that
\[
g_{ij}=d_i^2=d_j^2.
\]
If $\sigma_j>\sigma_i$, Equation~\eqref{eq:correlated-externality}
gives
\[
\frac{\partial U_j}{\partial\lambda_i}
=
\frac{\theta_jd_i^2}{n-1}
(\sigma_j-\sigma_i)
>0.
\]
In this case, author $i$ already lies closer to the shared norm
than author $j$. Additional conformity by author $i$ moves their
realized styles farther apart and therefore increases author $j$'s
distinctiveness payoff. Thus, no theorem asserting a negative
conformity externality at every profile can hold for arbitrary
positively correlated signatures.

The orthogonal-signature result is therefore one member of a wider
class rather than an implication that holds for every signature
geometry. Orthogonality eliminates strategic cross terms and
yields the closed-form dominant-strategy expressions in
Theorem~4.2. More generally, nonpositive signature alignments
preserve the negative externality, and under the conditions of
Corollary~\ref{cor:nonpositive-signatures} they preserve
componentwise over-conformity. Positive alignments make the sign
and magnitude of the externality depend on the authors' relative
retained distinctiveness.
\end{remark}

\section{A Heterogeneous Population with Multiple Interaction \\Mechanisms}
\label{app:im4}
This section introduces a fourth interaction mechanism that combines Mechanisms 1, 2 and 3 within a single population by partitioning authors into subpopulations, each following the update rule of its assigned mechanism.

\begin{mechanism}[A heterogeneous population with multiple interaction mechanisms]
	\label{problem4}
	The population is partitioned into three disjoint subpopulations
	$
	[n] = \mathcal{I}_1 \cup \mathcal{I}_2 \cup \mathcal{I}_3,
	$
	where authors in $\Ical_1$, $\Ical_2$ and $\Ical_3$ follow Interaction Mechanisms 1, 2 and 3, respectively. Let $q^0$ denote the common base model distribution. 
	Authors in $\Ical_1$ interact with a shared model whose distribution remains fixed at $q_0$. Their linguistic-style distributions evolve as
	\[
	p_i^{t+1} = (1-\alpha_i)\,p_i^t+\alpha_i\, \Acal_i(q^0,r_i), \qquad i \in \Ical_1.
	\]
	Authors in $\Ical_2$ interact with a shared model whose linguistic style distribution ${q}^t$ is recursively updated using feedback from authors in $\Ical_2$. Their coupled author--model dynamics evolve as
	\begin{align*}
		p_i^{t+1} &= (1-\alpha_i)\, p_i^t + \alpha_i\, \Acal_i({q}^t,r_i), \qquad i\in \Ical_2,\\
		q^{t+1} &= \beta\, q^t+(1-\beta)\, \Bcal \big((p_j^{t+1})_{j\in\mathcal{I}_2}\big),
	\end{align*}
	where $\Bcal$ aggregates the updated author distributions in $\Ical_2$. In particular, we use the weighted-average update 
	\[
	\Bcal \big((p_j^{t+1})_{j\in\Ical_2}\big) = \sum_{j\in \Ical_2} w_j p_j^{t+1},
	\qquad w_j \ge 0,\quad \sum_ {j\in \Ical_2} w_j=1.
	\]
	Each author $i \in \Ical_3$ interacts with an author-specific personalized model $q_i^t$, initialized from a common base distribution $q_i^0=q^0$. Their coupled author--model dynamics are
	\begin{align*}
		p_i^{t+1} &= (1-\alpha_i)\, p_i^t + \alpha_i \Acal_i(q_i^t,r_i), \qquad i\in \Ical_3 ,\\
		q_i^{t+1} &= \beta\, q_i^t + \gamma\, p_i^{t+1} + \delta\, \sum_{j\in \Ical_3} w_j p_j^{t+1},
		\qquad i\in \Ical_3 ,
	\end{align*}
	where $\beta,\gamma,\delta\geq 0$, $\beta+\gamma+\delta=1$,
$w_j\geq 0$, and $\sum_{j\in\Ical_3} w_j=1$.
	The three subpopulations evolve according to their respective update rules, with no additional coupling across $\Ical_1$, $\Ical_2$ and $\Ical_3$. Thus, \Cref{problem4} captures a heterogeneous population in which different groups use LLM assistance through different interaction mechanisms.
\end{mechanism}

\begin{remark}[Convergence under Interaction Mechanism~4]
	Under the conformity-mixture adaptation operators and parameter conditions of \Cref{prop:fixed_conformity_mixture,prop:shared_recursive_conformity,prop:personalized_recursive}, Interaction Mechanism~4 does not introduce additional coupling across the three subpopulations. Therefore, convergence follows by applying the corresponding result separately within each subpopulation. Authors in $\Ical_1$ converge to the fixed-model limits characterized in \Cref{prop:fixed_conformity_mixture}; authors in $\Ical_2$, together with their shared recursively updated model, converge to the endogenous shared equilibrium characterized in \Cref{prop:shared_recursive_conformity}; and authors in $\Ical_3$, together with their personalized models, converge to the family of author-specific equilibria characterized in \Cref{prop:personalized_recursive}.
	
	Consequently, the population-level diversity $D^t$ also converges. Its limit is the average pairwise Jensen--Shannon divergence among the limiting author distributions across the whole population. In particular, if $p_i^\ast$ denotes the limiting distribution of author $i$ under the mechanism assigned to their subpopulation, then
	\[
	D^t\;\longrightarrow\; D^\ast_{\mathrm{mix}} :=\frac{1}{n(n-1)} \sum_{i\neq j} JS(p_i^\ast,p_j^\ast).
	\]
	Equivalently, this limiting diversity decomposes into within-subpopulation terms, determined by the limiting diversity inside each of IMs~1-3, and between-subpopulation terms, determined by the distances between the equilibria reached by authors using different mechanisms. Thus, the mixed mechanism can yield an intermediate level of diversity in simulations, because the population contains authors governed by fixed, recursive, and personalized interaction rules. However, the exact limiting value depends on the sizes of the subpopulations, their initial and preferred distributions, and the parameters of each mechanism; no universal ordering relative to IMs~1-3 follows without additional assumptions.
\end{remark}

\section{Additional Experimental Results}
\label{app:experiments}
This appendix reports robustness and diagnostic experiments complementing
Figure~\ref{fig:linguistic_diversity}. We study sensitivity to population size,
Dirichlet concentration, conformity heterogeneity, and the diversity metric,
together with auxiliary convergence measures, personalization dynamics, and a
heterogeneous population whose subgroups follow IM~1--3.
Our implementation\footnote{LLMs (\eg, ChatGPT Codex, CoPilot) were used to assist with implementation and code refinement. All generated code was reviewed and verified by the authors, who remain responsible for its correctness.} is written in \texttt{Python} programming language using standard numerical libraries. All experiments were executed on a commodity laptop. The source code is available as open-source under a modest license agreement.\footnote{\url{https://github.com/suhastheju/llm-monoculture-code}}

Unless stated otherwise, we simulate $n=100$ authors over $m=10$ abstract
linguistic features for $T=200$ steps. We initialize $p_i^0$, $q^0$, and $r_i$
independently from $\operatorname{Dirichlet}(1,\ldots,1)$ and use the same
initialization across mechanisms within each run. We independently sample
$\alpha_i\sim\operatorname{Uniform}[0.05,0.50)$ and
$\lambda_i\sim\operatorname{Uniform}[0,1)$, and use uniform author weights
$w_i=1/n$. The baseline parameters are $\beta=0.25$ and
$\rho=2/3$ (equivalently, $\gamma=0.5$ and $\delta=0.25$).
All panels report means over 100 independent paired runs, with shading
denoting $\pm1$ sample standard deviation. Time is displayed on a
$\log(1+t)$ scale to resolve the rapid initial transient; tick labels report
the original time steps. The master random seed is $123456789$.
The absolute plateau levels and the ordering between IM~1 and IM~2 depend on
the initialization and parameter prior. The experiments therefore illustrate
mechanism-specific dynamics in controlled parameter regimes rather than
establish a universal ordering.
Alongside the Jensen--Shannon diversity $D^t$, we report the translation-invariant
quadratic diagnostic
\[
\widetilde D^t
:=
\frac{m}{8n(n-1)}
\sum_{i\neq j}\|p_i^t-p_j^t\|_2^2.
\]
This quantity depends only on the pairwise difference vectors $p_i^t-p_j^t$,
so a common translation of an author cloud leaves it unchanged. The factor
$m/8$ matches the second-order JS scaling when pairwise midpoints lie at the
barycenter. We use $\widetilde D^t$ only as a diagnostic; the paper's primary
diversity measure remains $D^t$.

\paragraph{Dirichlet concentration and diversity metric.}
We first vary the initialization concentration
$a\in\{0.1,1,5\}$ in
$\operatorname{Dirichlet}(a,\ldots,a)$. The ordering
IM~2 $<$ IM~1 $<$ IM~3 holds at $T=200$ under both $D^t$ and
$\widetilde D^t$ for every value of $a$. For the baseline $a=1$, the quadratic
endpoints are approximately $0.067$ for IM~2, $0.083$ for IM~1, and $0.107$
for IM~3. Thus, the observed endpoint ordering is also present under the
quadratic diagnostic and is not solely an artifact of the positional
sensitivity of Jensen--Shannon divergence.

\begin{figure}[t]
    \centering
    \includegraphics[width=0.95\linewidth]{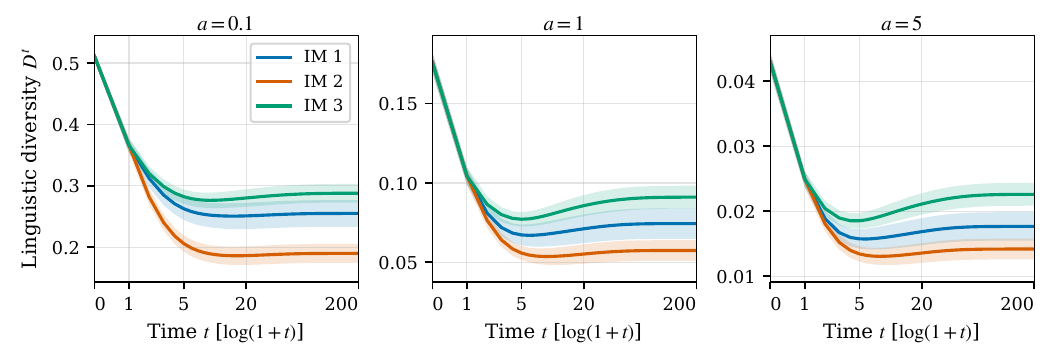}
    \caption{Linguistic-diversity trajectories $D^t$ under IM~1--3 for
$\operatorname{Dirichlet}(a,\ldots,a)$ initialization. The endpoint ordering
IM~2 $<$ IM~1 $<$ IM~3 holds at every concentration. Lines show means over
100 runs, and shading denotes $\pm1$ sample standard deviation.}
    \label{fig:appendix_robustness_dirichlet}
\end{figure}

\paragraph{Conformity heterogeneity.}
To separate positional and geometric effects, we sample
$\lambda_i\sim\operatorname{Uniform}[0.5-w,0.5+w]$ for
$w\in\{0,0.25,0.5\}$. At $w=0$, IM~1 and IM~2 have identical limiting
pairwise difference vectors, and their quadratic gap is numerically zero
($8.9\times10^{-9}$), while their JS gap is approximately $0.0071$.
The latter therefore reflects JS's dependence on location within the simplex.
As conformity becomes heterogeneous, the quadratic gap rises to approximately
$0.0041$ at $w=0.25$ and $0.0163$ at $w=0.5$, showing that heterogeneous
responses to different anchors create a genuine difference in pairwise geometry.

\begin{figure}[t]
    \centering
    \includegraphics[width=0.5\linewidth]{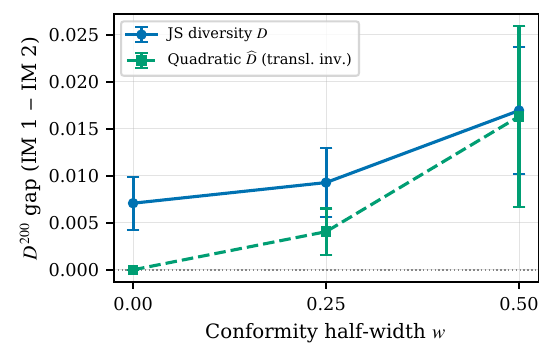}
    \caption{IM~1 minus IM~2 endpoint diversity as conformity heterogeneity
    increases. Under common conformity, the quadratic gap vanishes although
    the JS gap remains positive; with heterogeneous conformity, a genuine
    geometric gap emerges under both metrics.}
    \label{fig:appendix_robustness_conformity}
\end{figure}

\paragraph{Population-size robustness.}
Figure~\ref{fig:appendix_population_size} compares the three interaction
mechanisms for $n=100$ and $n=500$. The qualitative ordering is stable across
the two population sizes: recursive shared feedback produces the lowest
long-run diversity, while personalized feedback preserves the most diversity.
Variability across runs is smaller at $n=500$, as expected since $D^t$ averages
over $O(n^2)$ author pairs.

\begin{figure}[t]
    \centering
    \includegraphics[width=0.95\linewidth]{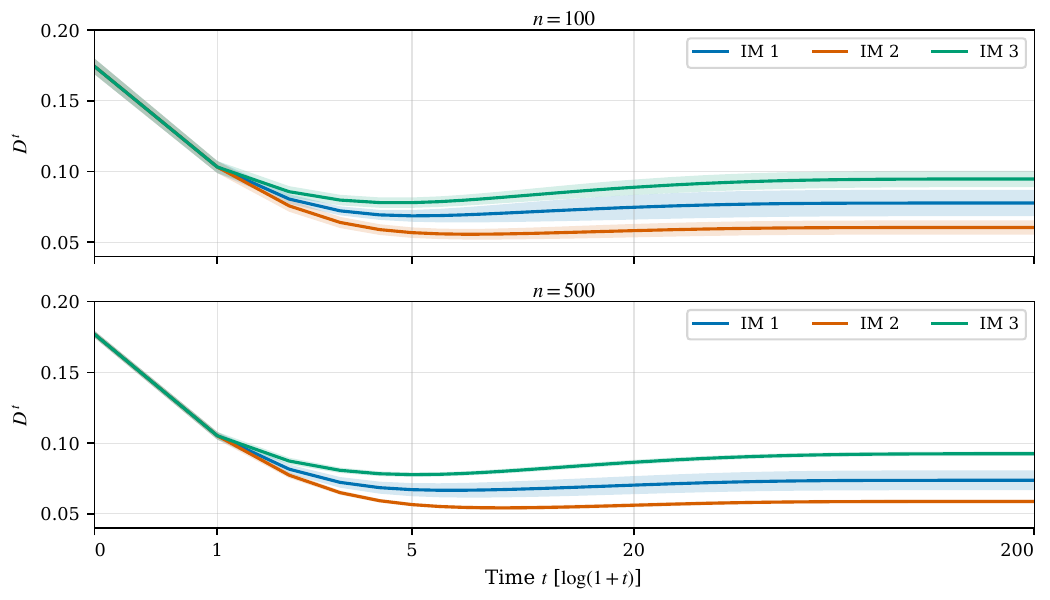}
    \caption{Population-size robustness of linguistic diversity $D^t$ under
    IM~1--3 for $n=100$ (top) and $n=500$ (bottom). Lines show means over
    100 runs and shading denotes $\pm1$ standard deviation.}
    \label{fig:appendix_population_size}
\end{figure}

\paragraph{Auxiliary convergence measures.}
Figure~\ref{fig:appendix_auxiliary_metrics} reports author--model divergence
$M^t$ and personalized-model diversity $Q^t$. Because IM~1 and IM~2 use a
single shared model, their model diversity is identically zero and is omitted
from panel~(b). Under IM~3, $M^t$ rapidly falls to a small positive level while
$Q^t$ remains positive, indicating that personalized models become distinct
while remaining closely aligned with their respective authors. The limiting model diversity is
smaller than the limiting author diversity ($Q^\star\approx0.041$ versus
$D^\star\approx0.095$), consistent with \Cref{prop:personalized_recursive}: each
$q_i^\star$ retains the author-specific component $r_i$ with weight
$\rho\eta_i<\eta_i$.

\begin{figure}[t]
    \centering
    \includegraphics[width=0.95\linewidth]{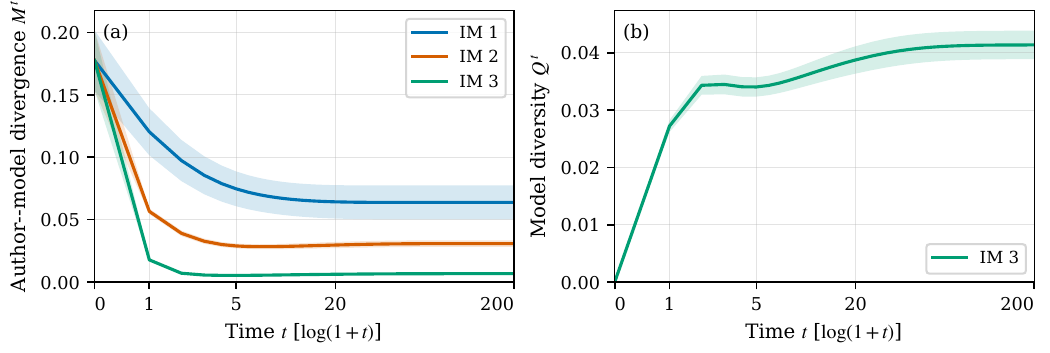}
    \caption{Auxiliary convergence measures under the baseline parameters:
    (a) author--model divergence $M^t$ for IM~1--3 and (b) personalized-model
    diversity $Q^t$ for IM~3. Lines indicate mean over 100 runs and shading
    denotes $\pm1$ standard deviation.}
    \label{fig:appendix_auxiliary_metrics}
\end{figure}

\paragraph{Personalization and transient dynamics.}
To complement the endpoint comparison in Figure~\ref{fig:linguistic_diversity},
Figure~\ref{fig:appendix_personalization} shows the full IM~3 trajectory for
$\rho\in\{0,2/3,1\}$. We hold $\beta=0.25$ fixed and set
$\gamma=(1-\beta)\rho$ and $\delta=(1-\beta)(1-\rho)$. Increasing $\rho$
therefore reallocates feedback weight from the shared population to the
associated author.

The endpoints provide a check on \Cref{prop:personalized_recursive}. At $\rho=0$ the
personalized update reduces exactly to IM~2, and the observed
$D^{200}\approx0.057$ matches the IM~2 plateau. At $\rho=1$ we have $\eta_i=1$,
so $p_i^\star=q_i^\star=r_i$; since $r_i$ and $p_i^0$ are drawn from the same
Dirichlet, diversity should return to approximately its initial level, and the
observed $0.171$ is consistent with $D^0\approx0.175$.

All mechanisms exhibit a non-monotone transient: $D^t$ undershoots its limit
near $t\approx5$ before partially recovering. Because personalized models are
initialized at the common base distribution $q^0$, every author is initially
pulled toward the same point; the $q_i^t$ differentiate only later, allowing
authors to drift back toward their preferred $r_i$. The recovery scales with
$\rho$, consistent with this account.

\begin{figure}[t]
    \centering
    \includegraphics[width=0.95\linewidth]{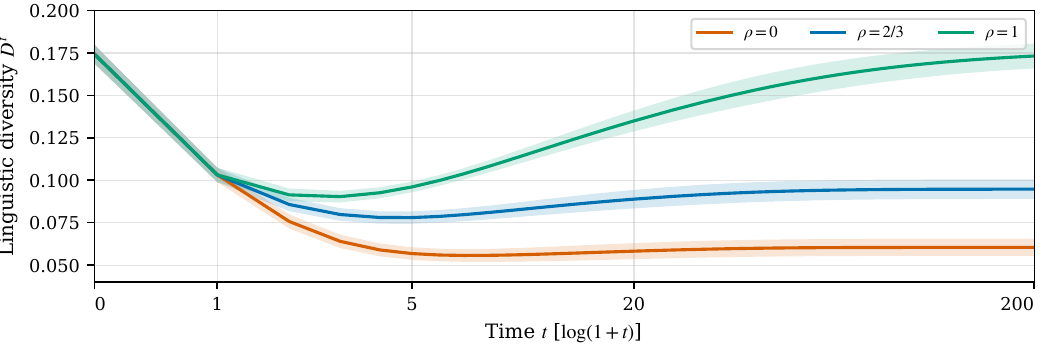}
    \caption{IM~3 linguistic-diversity trajectories for three personalization
    levels. Larger $\rho$ preserves more author-specific feedback and produces
    higher long-run $D^t$. Lines show mean over $100$ runs and shading denotes
    $\pm1$ standard deviation.}
    \label{fig:appendix_personalization}
\end{figure}

\paragraph{Heterogeneous interaction mechanisms.}
For IM~4, we assign proportions $0.33$, $0.33$, and $0.34$ of the population
to IM~1, IM~2, and IM~3, respectively, using assignment seed
$123456789$. Each subgroup follows its own update rule. In particular,
the population-level component received by personalized models is computed
only from authors in the IM~3 subgroup, as specified in \Cref{app:im4}; there is
no additional cross-subpopulation feedback. All three quantities stabilize; the
positive limiting value of $Q^t$ reflects differences among the author-facing
models used across and within the three subgroups.

\begin{figure}[t]
    \centering
    \includegraphics[width=0.98\linewidth]{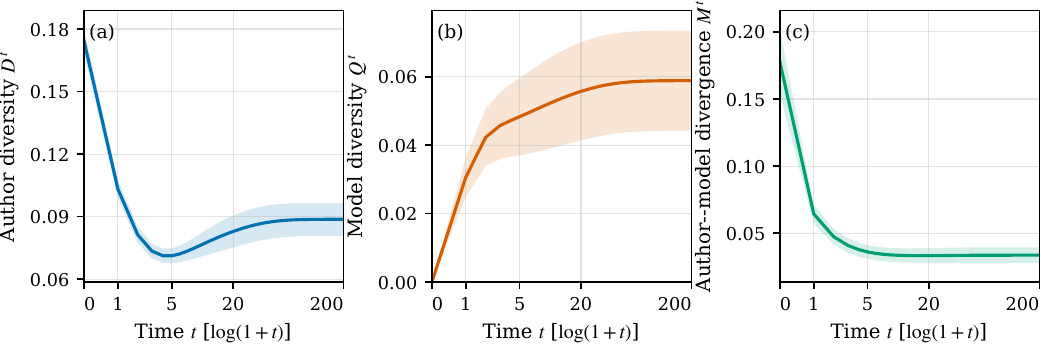}
    \caption{Evolution of (a) author diversity $D^t$, (b) diversity among
    author-facing model distributions $Q^t$, and (c) author--model divergence
    $M^t$ under heterogeneous IM~4. Lines show mean over 100 runs and
    shading denotes $\pm1$ standard deviation.}
    \label{fig:appendix_im4_metrics}
\end{figure}

\end{document}